\newcommand\dkl{D_\text{KL}}
\theoremstyle{plain}
\newtheorem{theorem}{Theorem}[section]
\newtheorem{lemma}{Lemma}[section]
\newcommand\bx{\boldsymbol{x}}
\newcommand\bu{\boldsymbol{u}}
\begin{document}
\twocolumn[

\aistatstitle{Privacy-Constrained Policies via Mutual Information Regularized Policy Gradients}

\aistatsauthor{ Chris Cundy* \And Rishi Desai* \And  Stefano Ermon }

\aistatsaddress{\url{cundy@cs.stanford.edu} \\  Stanford University \And  \url{rdesai2@cs.stanford.edu} \\ Stanford University \And \url{ermon@cs.stanford.edu} \\ Stanford University}

]

\begin{abstract}
  As reinforcement learning techniques are increasingly applied to real-world decision problems, attention has turned to how these algorithms use potentially sensitive information.
We consider the task of training a policy that maximizes reward while minimizing disclosure of certain sensitive state variables through the actions.
We give examples of how this setting covers real-world problems in privacy for sequential decision-making.
We solve this problem in the policy gradients framework by introducing a regularizer based on the mutual information (MI) between the sensitive state and the actions.
We develop a model-based stochastic gradient estimator for optimization of privacy-constrained policies. We also discuss an alternative MI regularizer that serves as an upper bound to our main MI regularizer and can be optimized in a model-free setting, and a powerful direct estimator that can be used in an environment with differentiable dynamics.
We contrast previous work in differentially-private RL to our mutual-information formulation of information disclosure.
Experimental results show that our training method results in policies that hide the sensitive state, even in challenging high-dimensional tasks.
\end{abstract}

\section{INTRODUCTION}

  Reinforcement learning (RL) algorithms have shown dramatic successes in areas such as game-playing~\cite{silverMasteringGameGo2016} and robotics~\cite{haarnojaSoftActorCriticOffPolicy2018}.
    This increasing adoption of RL techniques in real-world settings has illustrated
    the need for constraints on policies which are hard to express in the typical RL formulation, such as fairness~\cite{nabiLearningOptimalFair2019},
    risk-sensitivity~\cite{a.RiskSensitiveReinforcementLearning2018}, and safety under exploration~\cite{dalalSafeExplorationContinuous2018a}.    
    We aim to learn a policy to maximize reward, while minimizing the extent to which the policy's actions expose information about a sensitive subset of the state variables.

  This setting is natural to consider given the increasing prevalence of RL algorithms in society, many taking public actions based in part on sensitive internal information.
  Examples include a personal assistant algorithm setting appointments while not revealing important personal information, or a hospital administration algorithm assigning rooms to patients while avoiding disclosure of private medical details.

    A naive approach to this problem is to exclude the sensitive variables from the policy's input.
    However, it is well known~\cite{dworkCalibratingNoiseSensitivity2006} that this approach will fail when correlations exist between the sensitive and non-sensitive state variables.
    In the hospital example above, the policy may give room assignments based on non-sensitive variables such as age, which are correlated with medical status.
    Furthermore, the possibility of feedback in the setting introduces additional complications: the agent may be able to deliberately take actions to minimize future information disclosure.

    In the hospital scheduling example, this could involve investment so that all hospital rooms are equipped to treat any condition. Then the room assignment does not reveal any medical information.

  We formulate this privacy-constrained RL problem as an optimization problem with an additional regularizer on the mutual information between a function of the action \(a_t\) and a function of the protected state \(u_t\) at each timestep \(t\), induced under the learned policy \(q\).
    Optimizing this regularizer is not straightforward since it is distribution-dependent (unlike the reward), and involves marginalization over the non-sensitive state.
    We consider several different mutual information constraints and introduce gradient estimators, allowing privacy-dependent policies to be trained in the policy-gradients setting. First, we introduce a model-based gradient estimator for \(I_q(a_t;u_t)\), and a model-free gradient estimator for \(I_q(\tau_a, \tau_x;\tau_u)\), which serves as an upper-bound to \(I_q(a_t;u_t)\). Finally, we implement a reparameterized gradient estimator for \(I_q(a_{1:t};u_t)\) which can be applied given access to differentiable dynamics.
    Experiments show that our constrained optimization finds the optimal privacy-constrained policy in an illustrative tabular environment and hides sensitive state in a continuous control problem. Finally, we show that the reparameterized estimator can find policies which effectively hide the sensitive state in high-dimensional (simulated) robotics tasks.

\section{THE STATE-INFORMATION CONSTRAINED SETTING}
We analyse privacy-constrained sequential decision-making as a Markov decision process~\cite{suttonReinforcementLearningIntroduction2018} (MDP)
with information-theoretic constraints.
We consider episodic MDPs with a fixed length \(T\), with a state space \(\mathcal{S} = \mathcal{X} \times \mathcal{U}\) consisting of sensitive state variables \(u \in \mathcal{U}\) and non-sensitive variables \(x \in \mathcal{X}\).
In specific problems, the sensitive variables could include gender, location, or a sensitive health status.
In each episode, the initial state is drawn from a distribution \(p(x_1, u_1)\).
At each successive timestep the agent draws an action \(a_t\) from its policy \(q_\phi(a_t|x_t, u_t)\)
parameterized by learnable parameters \(\phi\) and the environment samples the successor state \((x_{t+1}, u_{t+1})\)
from the dynamics \(p\), giving a distribution over trajectories
\begin{align}
  q_\phi(\tau) = p(x_1, u_1)\prod_{t=1}^{T}q_\phi(a_t|x_t,u_t)p(x_{t+1}, u_{t+1}|a_t, x_t, u_t),
\end{align}
where \(\tau = (\tau_a, \tau_{x}, \tau_{u})\) is the collection of the actions and (sensitive and non-sensitive) states sampled in the trajectory.
At each timestep \(t \geq 1\), we obtain a reward \(r(x_t, u_t, a_t)\) concurrently with the transition to the next state.
In the standard formulation, the goal is to learn a policy \(q_\phi\) that results in a high expected reward.

As a concrete example, consider an RL algorithm acting as a virtual assistant, sending
emails and making appointments at each timestep.
The virtual assistant has available a set of variables describing the assistant's owner, of which some may be designated sensitive (e.g. private relationship status, trade secrets), and some may not be (e.g. current job, education status).
The assistant's goal is to take the best actions, corresponding to the most useful emails and appointments.
Furthermore, certain actions may change the state, affecting
future decisions. 
We are interested in learning a policy that maximizes cumulative reward while not allowing an
adversary to infer the values of the private state variables. In the simplest case, the adversary observes a single
action \(a_t\) and wishes to infer the sensitive variables at that time, \(u_t\) (not being
interested in the sensitive variables in the past or future), not observing any of the
state variables. We discuss alternative goals for the adversary in the next section.

We want to develop a worst-case guarantee, where the adversary cannot infer the value of \(u_t\) with full knowledge of the policy parameters \(\phi\) and the environment dynamics.
In this setting, a natural quantity to bound is the mutual information \(\mathbb{E}_{u_t, a_t \sim q_\phi}\left[\log q_\phi(u_t|a_t) - \log q_\phi(u_t)\right] = I_{q_\phi}(a_t;u_t)\), i.e. the amount of information an optimal adversary gains about \(u_t\) from observing \(a_t\)~\citep{liaoHypothesisTestingMaximal2017}.
By the data processing inequality, no adversary can gain more information about \(u_t\) from observing \(a_t\) than \(I_{q_\phi}(a_t;u_t)\), and so it serves as an upper bound on the information inferable by computation- or data-limited adversaries.
As a special case, if \(u_t\) is independent of \(a_t\), then the mutual information is zero. 

Therefore we formulate the problem of learning a state-private policy as a constrained optimization problem~\cite{altmanConstrainedMarkovDecision1999}, aiming to solve the problem
\begin{align}
    \label{eq:main-problem}  
  \hspace{-0.4cm} \underset{\phi}{\text{Max}} \ \underset{\tau \sim q_\phi}{\mathbb{E}}\left[\sum_{t=1}^Tr(x_t,u, a_t)\right], \quad \text{s.t.} \ I(a_t;u_t) < \epsilon_t \ \forall t,
\end{align}
with \(\epsilon_t\) a set of parameters 
that we can adjust to trade off good performance against low privacy.
Notably, \(I_{q_\phi}(a_t;u_t)\) does not involve the non-sensitive state \(x_t\).
In other words, we consider the mutual information between the actions and sensitive state,
marginalized over the distribution of non-sensitive state induced by \(q_\phi\). Furthermore the mutual information term is with respect to the distribution over trajectories induced by the policy \(q_\phi\), i.e.  \(I_{q_\phi}(a_t;u_t) = \dkl \left(
q_{\phi}(a_t, u_t)\|q_\phi(a_t)q_\phi(u_t) \right)\), with \(q_\phi(a_t,u_t) = \int_{\tau_{x_{1:t}}, \tau_{(a,u)_{1:t-1}}}
q_\phi(a_t|u_t, x_t)q_\phi(u_t,x_t|\tau_{(a, u)_{1:t-1}}, \tau_{x_{1:t-1}})\). This means that when
choosing the action at time \(t\), the agent must consider the effects on the
distribution at future timesteps, such as taking a corrective action that allows future
actions to be independent of \(u_t\).
\subsection{Alternative Threat Models}
\label{sec:alternative-threat-models}
In the previous section we discussed the threat model where the adversary aims to infer the
sensitive state at time \(t\), \(u_t\), by observing the corresponding action \(a_t\). This
corresponds to the case where the policy has a one-off interaction with an adversary where the
adversary does not aim to infer the previous or future states \(u_{t'}\) for \(t' \neq t\).
We can also consider adversaries that observe the current and previous actions \(a_{1:t}\) and wish to infer the current \(u_t\), or observe the whole trajectory of actions \(\tau_a\) and
wish to infer a single \(u_t\) or the whole sensitive trajectory \(\tau_u\).
Finally, an adversary might observe \(\tau_x\) and \(\tau_a\) and try to find \(\tau_u\). As incorporating more variables always increases mutual information,
\begin{align}
 & I_{q_\phi}(a_t;u_t) \leq I_{q_\phi}(a_{1:t};u_t) \leq I_{q_\phi}(\tau_a;u_t), \nonumber \\
& I_{q_\phi}(\tau_a;u_t) \leq I_{q_\phi}(\tau_a;\tau_u) \leq I_{q_\phi}(\tau_a, \tau_x; \tau_u).
\end{align}
Therefore, we can interpret \(I_{q_\phi}(\tau_a, \tau_x; \tau_u)\) as the relevant quantity to constrain if the adversary has access to all actions and non-sensitive states
and wishes to infer all sensitive states, or as an upper bound to any of the
MI quantities for other threat models. 
\subsection{Dual Formulation}
We can approach the constrained optimization problem~\eqref{eq:main-problem} by considering the Lagrangian
dual problem
\begin{align}
  \label{eq:main-dual-problem}
  \underset{\boldsymbol{\lambda} \geq 0}{\min}\ \underset{\phi}{\max} \ & \mathbb{E}_{\tau \sim q_\phi}   \left[R(\tau)\right] - \sum_{t=1}^T\lambda_t(I_{q_\phi}(a_t;u_t) - \epsilon_t),
\end{align}
where \(\boldsymbol{\lambda}\) is a vector of Lagrange multipliers. 
For a constrained optimization problem, the solution to the dual problem is a lower bound on the primal problem, which is not necessarily tight. However, for an important set of privacy-preserving problems, the bound is tight:
\begin{theorem}
  \label{sec:dual-formulation}
  For a time-dependent policy, \(q_\phi^t(a_t|x_t,u_t)\) in an MDP where \(u_t\) is independent of actions, equations \eqref{eq:main-dual-problem} and \eqref{eq:main-problem} have the same solution, i.e. strong duality holds between the primal and dual.
\end{theorem}
\begin{proof}
  Full details are in the appendix. The proof follows \citet{paternain2019constrained} which requires Slater's conditions and a concave perturbation function. The perturbation function \(P(\boldsymbol{\xi})\) is the value attained in equation \eqref{eq:main-dual-problem} with constraints \(-\boldsymbol{\xi} + \boldsymbol{\epsilon}\). Concavity follows from the convexity of the mutual information, and Slater's condition is satisfied by a random policy.
\end{proof}
Although we cannot prove strong duality in the general case, it holds empirically in our experiments for all cases where we can compute the optimal policy analytically.
In order to solve the inner maximization problem with gradient descent, we require estimators
for \(\nabla_\phi \mathbb{E}_{\tau \sim {q_\phi}}\left[R(\tau)\right]\) and \(\nabla_\phi I_{q_\phi}\), where \(I_{q_\phi}\) can be any of the mutual information quantities mentioned above.
Since even evaluating the mutual information in high dimensions is challenging~\cite{paninskiEstimationEntropyMutual2003}, this
is not trivial. We provide three different approaches:
a method for estimating \(\nabla_\phi I_{q_\phi}(a_t;u_t)\) where a dynamics model is available, a model-free gradient estimator for \(I_{q_\phi}(\tau_a, \tau_x;\tau_u)\), and a reparameterization based gradient estimator for any mutual information given a differentiable simulator. 

\section{RELATED WORK}
\label{sec:demographic-parity}
\subsection{Privacy In Reinforcement Learning}
The specific problem of satisfying privacy concerns while maximizing reward in a reinforcement learning context was introduced in~\cite{sakumaPrivacypreservingReinforcementLearning2008} and \cite{zhangPrivacyPreservingLearning2005}. Since then, several works have tackled the privacy-preserving RL problem in various special cases, such as linear contextual bandits \cite{neel2018mitigating,shariffDifferentiallyPrivateContextual2018}, multi-armed bandits \cite{sajedOptimalPrivateStochasticMAB2019,tossouAlgorithmsDifferentiallyPrivate2016}, and online learning with bandit feedback \cite{agarwalPriceDifferentialPrivacy2017,smithNearlyOptimalAlgorithms2013}. In non-bandit settings (i.e. the `general RL' setting) there is less work, most recently \citet{wangPrivacypreservingQLearningFunctional2019} and \citet{vietri2020private}, discussed below.

These works all use the differential privacy (DP) privacy metric. As summarized in \cite{basu2019differential}, a bandit algorithm is (globally) \(\epsilon\)-DP if \(\log q_\phi(\tau_a|\tau_x) - \log q_\phi(\tau_a|\tau_{x}') \leq \epsilon\) for all \(\tau_a\) and \(\tau_x, \tau_x'\) where \(\tau_{x}'\) is a trajectory that differs from \(\tau_x\) at only one timestep.
In general the relationship between DP and mutual information privacy constraints is not straightforward, although characterisations have been made in several settings \cite{wangRelationIdentifiabilityDifferential2016,mirInformationTheoreticFoundationsDifferential2012,dupincalmonPrivacyStatisticalInference2012}.
To our knowledge, a standard definition of DP privacy in the general RL setting is not agreed upon. Comparing to the DP constraint in the bandit setting, a key difference is that our constraint penalizes predictability of \(u_t\) given \(a_t\), in expectation over the distribution of \(u_t, a_t\), while a DP constraint penalizes predictability between neighbouring trajectories, with no notion of the relative likelihood of these trajectories. This is a particularly important difference in the general RL setting, where the ability of a policy to change the distribution of states is a key feature.

In \citet{wangPrivacypreservingQLearningFunctional2019}, the DP constraint is applied on the Q-learning algorithm itself, viewed as a function \(\mathcal{A}: \mathcal{R} \to \mathcal{Q}\) mapping a reward function to a Q-function. The DP requirement is that for any reward functions \(r, r'\) with \(\|r - r'\|_\infty < 1\), \(\log p(\mathcal{A}(r)) - \log p(\mathcal{A}(r')) < \epsilon\), with an RHKS measure over \(\mathcal{Q}\). We compare policies learned under this constraint to policies satisfying our MI constraint in section \ref{sec:experiments}. In the offline RL setting of \citet{qiao2024offline}, the DP constraint is applied between the individual trajectories in the training dataset and the resulting policy.

Finally, a DP constraint for general RL is described in ~\cite{vietri2020private}.
There, \(T\) episodes of length \(H\) are experienced, each with arbitrary dynamics and rewards. The constraint is that \(\log q_{\phi}(\tau_{a \setminus t}) - \log {\tilde q}_{\phi}(\tau_{a \setminus t}) \leq \epsilon\), for all \(\tau_{a \setminus t}\) (denoting the set of \(H(T-1)\) actions not including the actions in episode \(t\)) and all \(q, {\tilde q}\) (an MDP \(q\) and an MDP \({\tilde q}\), differing from \(q\) only in the \(t\)th episode). This is significantly more adversarial than our approach, which assumes a fixed MDP.

\subsection{Mutual Information Constraints in RL}
MI constraints have been used for reinforcement learning in the context of goal-directed RL. In the goal-directed setting, the agent has a goal \(g_t\) which affects the agent's choice of action, but not the dynamics. Previous work regularizes the MI  between goals and other quantities.
First, \cite{vandijkGroundingSubgoalsInformation2011} explored this in the options framework, regularizing \(I(a_t;g_t|\tau_{a_{1:t-1}}, \tau_{x_{1:t-1}})\).

More recently, both \cite{goyalInfoBotTransferExploration2019} and \cite{strouseLearningShareHide2018} studied the behaviour of policies regularized with the term \(I(a_t;g_t|s_t)\).
Both explore a rearrangement of this regularizer as the KL-divergence between the learned policy and a `default' policy.
In the case of \cite{goyalInfoBotTransferExploration2019} this is used to learn policies with diverse goals, similarly to the information bottleneck learning framework.
In the case of \cite{strouseLearningShareHide2018} the motivation is explicitly to learn agents that either share or hide action-goal information (depending on the sign of the regularizer's coefficient).
Although a similar motivation, the threat model considered is different: in~\citet{strouseLearningShareHide2018}, the adversary knows the state at time \(t\) and aims to infer the goal from the actions, while in our approach the state is unobserved and the adversary wants to infer a subset of the state from the actions. Perhaps closest to our work is the case discussed where actions are
unobserved--controlling \(I(s_t;g_t)\). Our approach solves the corresponding problem for unobserved states--controlling \(I(a_t;g_t)\), with the additional aspect that our `goal' may influence the dynamics of the environment. 

Finally, \citet{grau2018soft} apply a penalty of \(I(a_t;s_t)\) to the reward to encourage adaptive exploration. We compare to this method in section \ref{sec:experiments}. In contrast, our penalty is with respect to a private subset of the state variables. This increased selectivity means we can achieve higher reward by allowing disclosure of non-private state variables.

\subsection{Demographic Parity}
In fair machine learning, the demographic parity objective~\citep{zemelLearningFairRepresentations2013} for binary prediction requires that the class predicted, \(\hat y\), is statistically independent of protected variables such as race or gender.
Previous work \citep{songLearningControllableFair2019} has formulated this as requiring \(I(\hat y ; u) = 0\) for protected variables \(u\), marginalizing over the unprotected variables \(x\). Our approach is equivalent to the demographic parity objective for one-timestep episodes (see section \ref{sec:equiv-demogr-parity}).
It is unclear whether extending demographic parity from the bandit setting makes sense as a notion of fairness: different formulations of fairness for sequential decision-making have been proposed, such as meritocratic fairness \citep{jabbariFairnessReinforcementLearning2017} or path-specific fairness~\citep{nabiLearningOptimalFair2019}.

\section{OPTIMIZATION OF PRIVACY CONSTRAINTS}
It is well known that in general it is intractable to compute the MI between two random variables~\cite{paninskiEstimationEntropyMutual2003}.
The reinforcement learning setting provides us with an additional challenge, as to perform efficient gradient-based 
optimization we must form explicit Monte-Carlo estimators of the gradient in terms of distributions
that we are able to sample from. Two common tricks in working with mutual information are
to approximate posterior distributions with adversarial training~\cite{nowozinFganTrainingGenerative2016}, and to form upper bounds by introducing auxiliary
variables. We use each trick to obtain two different gradient estimators for our objective. Finally, we introduce an estimator for the case where a differentiable simulator is available.
\subsection{Estimation of the MI constraint}
\label{sec:per-timestep}
Our simplest mutual information constraint is \(I_{q_\phi}(a_t; u_t) \leq \epsilon_t\), for all \(1 \leq t
\leq T\).
By definition, \(I_{q_\phi}(a_t; u_t) = \mathbb{E}_{a_t, u_t \sim q_\phi}\left[\log
  q_{\phi}(u_t|a_t) - \log q_\phi(u_t)\right]\).
In general, there is no way to obtain these
probabilities in closed-form in terms of \(\phi\), since we only know \(q_\phi(a_t|x_t,
u_t)\).
However, we can replace \(q_\phi(u_t|a_t)\) and \(q_\phi(u_t)\) with approximating distributions
\(p_\psi(u_t|a_t)\) and \(p_\psi(u_t)\).
We learn the parameters \(\psi\) of \(p_\psi\) by maximum likelihood on samples from \(q_\phi(\tau)\).
Given a sufficiently powerful model \(p_\psi\) and enough samples, \(p_\psi(u_t|a_t) \approx q_\phi(u_t|a_t)\)  and \(p_\psi(u_t) \approx q_\phi(u_t)\), so \(I_{q_\phi}(a_t; u_t) \approx \mathbb{E}_{a_t, u_t \sim q_\phi}\left[\log p_\psi(u_t|a_t) - \log p_\psi(u_t)\right]\).
If \(p_\psi\) recovers \(q_\phi\) exactly we achieve the equality.
By training a predictor \(p_\psi\) we can check if a policy
is fulfilling the MI constraint in equation~\eqref{eq:main-problem}. For an alternative MI such as \(I_{q_\phi}(a_{1:t};u_t)\), we can similarly train a predictive model for \(u_t\) given \(a_{1:t}\). 
\subsection{Model Based Estimation of the MI Constraint Gradient}
In order to perform gradient-based constrained optimization to solve the problem in equation~\eqref{eq:main-problem}
we need a tractable gradient estimator.
If our policy is parameterized with \(\phi\), applying the policy
gradient theorem to \(\mathbb{E}_{a_t, u_t \sim q_\phi}\left[\log p_\psi(u_t|a_t) - \log
p_\psi(u_t)\right]\) gives us a gradient estimator \(\mathbb{E}_{a_t, u_t \sim q_\phi}\left[\left(\log p_\psi(u_t|a_t) - \log p_\psi(u_t)\right) \nabla_\phi \log q_\phi(a_t, u_t)\right]\).
However, \(\nabla_\phi \log q_\phi(a_t, u_t)\) is difficult to compute, as \(q_\phi(a_t, u_t)\) involves a marginalization over all previous states and actions in the trajectory. 
In Section \ref{sec:model-based-gradient} we show that
\begin{align}
  \label{eq:model-gradient}
  & \nabla_\phi \underset{{a_t, u_t \sim q_\phi}}{\mathbb{E}}\left[\log p_\psi(u_t|a_t) - \log p_\psi(u_t)\right] \\ = &  
 \nonumber \underset{a_t, u_t \sim q_\phi}{\mathbb{E}}\left[R_\psi(u_t, a_t)\underset{x_t \sim q_\phi(\cdot|u_t, a_t)}{\mathbb{E}}\left[
    \nabla_\phi\log q_\phi(a_t|x_t, u_t)\right.\right. \\
  &  \hspace{-0.5cm} + \underset{\tau_{(x,u,a)_{1:t-1}}\sim q_\phi(\cdot, \cdot|x_t, u_t)}{\mathbb{E}}\left[ \nonumber
    \sum_{t'=1}^{t'=t-1}\frac{p(x_{t'+1}, u_{t'+1}|a_{t'}, x_{t'}, u_{t'})}{q_\phi(x_{t'+1}, u_{t'+1}|x_{t'}, u_{t'})}\right.\\
  & \hspace{4.2cm}  \times \nabla_\phi \log q_\phi(a_{t'}|x_{t'}, u_{t'})\Bigg]\Bigg]\Bigg], \nonumber
\end{align}
where \(R_\psi(u_t, a_t) = \log p_\psi(u_t|a_t) - \log p_\psi(u_t)\).
To compute this estimate we need to know the transition dynamics of the MDP, \(p(x_t, u_t|a_{t-1}, x_{t-1}, u_{t-1})\).
Although the model-based requirement may seem stringent, it is plausible that a model will be available in higher-stakes settings where privacy is a consideration.
Furthermore, model-based techniques are increasingly popular due to empirical~\cite{kaiserModelBasedReinforcementLearning2019} and theoretical~\cite{duGoodRepresentationSufficient2019} sample-efficiency improvements over model-free techniques.
Since we can compute \(q_\phi(x_t, u_t|x_{t-1}, u_{t-1})\) as \(\int_{a_{t-1}}p(x_t, u_t|a_{t-1}, x_{t-1}, u_{t-1})q_\phi(a_{t-1}|x_{t-1}, u_{t-1})da_{t-1}\), a separate \(q_\phi(x_t, u_t|x_{t-1}, u_{t-1})\) is not needed. A similar estimator can be constructed for \(I_{q_\phi}(\tau_a; \tau_u)\) (details in the appendix, section \ref{sec:itau_a-tau_u-regul}).
\subsection{Action-Trajectory Mutual Information Constraint Gradient}
\label{sec:action-trajectory-mutual-information}
We can avoid the model-based marginalization in the previous section by
explicitly including the trajectory of actions and states in our mutual information term, and
so considering the constraint \(I_{q_\phi}(\tau_x, \tau_a; \tau_u) \leq \epsilon\).
As discussed in section~\ref{sec:alternative-threat-models}, this constraint is an upper bound for the constraint in equation~\eqref{eq:main-problem}, as well as an interesting constraint itself.

Similarly to above, we approximate \(q_\phi\) with a learned predictor \(p_\psi\), so \(I_{q_\phi}(\tau_a, \tau_x;\tau_u) \approx \mathbb{E}_{\tau\sim q_\phi} \left[\log p_\psi(\tau_u|\tau_{x}, \tau_a) - \log p_\psi(\tau_u)\right]\), for a sufficiently accurate predictor \(p_\psi\). This has a tractable gradient:
\begin{align}
\nabla_\phi \underset{\tau\sim q_\phi}{\mathbb{E}} \left[R_\psi(\tau)\right] = \underset{\tau\sim q_\phi}{\mathbb{E}} \left[R_\psi(\tau)\nabla_\phi \log q_\phi(\tau) \right],
  \end{align}
  where \(R_\psi(\tau) = \log \frac{p_\psi(\tau_u|\tau_{x}, \tau_a)}{p_\psi(\tau_u)}\). We can use this estimator to learn policies solving the constrained optimization problem in equation~\eqref{eq:main-problem}. We can optimize this quantity without any knowledge of the dynamics. If we aim to use this as an upper bound, the tradeoff is that the upper bound may be loose. In the appendix (section \ref{sec:u-shielded}) we examine the looseness of the bound in the setting where \(u_t\) influences the transitions and rewards only through the initial state.

\subsection{Estimation of MI Constraint Gradient with Differentiable Simulator}
With differentiable simulation environments \citep{freeman2021brax, hu2019difftaichi} it is possible to form path-based (reparameterization) gradient estimators. Provided the policy and dynamics can be written as differentiable functions of random variables \(\zeta\) (written \(\tau(\zeta, \phi)\)), we have, for any differentiable functions \(f, g\), 
\begin{align}
  & \nabla_\phi \underset{{a_{1:t}, u_t \sim q_\phi}}{\mathbb{E}}\left[I_{p_\psi}(f(\tau_a, \tau_x);g(\tau_u)) \right]\nonumber \\
& = \underset{{\zeta\sim p(\zeta)}}{\mathbb{E}}\left[\nabla_\phi I_{p_\psi}(f(\tau_a(\zeta, \phi), \tau_x(\zeta, \phi));g(\tau_u(\zeta, \phi))) \right], \nonumber
\end{align}
where \(\nabla_\phi I_{p_\psi}(f(\tau_a(\zeta, \phi), \tau_x(\zeta, \phi));g(\tau_u(\zeta, \phi)))\) can be computed directly with automatic differentiation.
In particular, we investigate the statistic \(I(a_{1:t};u_t)\), i.e. how well an adversary can guess the hidden state at time \(t\) after seeing all the actions up to that time. The main concern with this method is the variance of the gradient estimator if the dynamics of the simulator are stiff. In the case of rigid body simulations, the gradient estimator can even have much higher variance than the corresponding score function estimator \cite{suh2022differentiable}, and the variance can increase rapidly with \(t\).
We explore this in our experiments, and show that a simple truncation in backpropogation can control the variance sufficiently to produce complex state-hiding policies in high-dimensional simulated robotics tasks.

\section{EXPERIMENTS}
\label{sec:experiments}
In this section we use our constrained optimization procedure to solve privacy-constrained tasks in several different environments. Using the model-based score function estimator, we first consider a tabular task to illustrate that we can learn policies that intelligently plan ahead, changing the distribution over future states in order to reduce the information leaked in subsequent timesteps.
We compare the behaviour of a differentially-private Q-learning policy to our mutual information-constrained policy. Additionally, we compare to a previous MI-constrained method, MIRL \citep{grau2018soft}. 

Going beyond tabular environments, we evaluate the model-based method on a two-dimensional control task.
Finally, we deploy the reparameterized method on two simulated robotics tasks with a differentiable simulator and PPO, a state-of-the-art RL policy-gradient approach, and show we can learn high-reward policies which effectively hide the hidden state. 
In the appendix, additional experiments compare the behaviour of the \(I(\tau_u; \tau_x, \tau_a;)\) constraint to the \(I(u_t;a_t)\) constraint on a toy example and investigate a welfare-allocation task.

\subsection{Privacy in Internet Connections}
\label{sec:priv-intern-conn}
As a first goal,
we want to illustrate that the learned policies exploit the structure of the problem in order to achieve the privacy constraints, such as taking early corrective actions which remove future \(u\)-dependence from actions.
To show this, we construct a tabular example representing connection to various web sites.
The agent has one of \(n\) IP addresses, which are considered private. At each of \(T\) timesteps, the agent has a choice of connecting to the websites via \(n\) \emph{mirrors}.
The mirror corresponding to the current IP address will connect quickest, resulting in highest reward \(r^*\).
Connecting via the other mirrors results in a slower or intermittent connection, with lower reward \(r^- \ll r^*\).
The agent can also purchase a VPN, which gives no immediate reward but allows reasonably good connection to all mirrors at future timesteps, with reward \(r^- \ll r^{\text{VPN}} < r^*\) for connecting to any mirror.
The binary non-sensitive state \(x \in \{0, 1\}\) represents whether the VPN has been purchased or not.

The unconstrained optimal policy is simply to always choose the mirror corresponding to the owner's IP address, resulting in a total reward of \(Tr^*\).
The optimal policy\footnote{Providing that \(\tfrac{T-1}{T} > \tfrac{r^-}{r^{\text{VPN}}}\), which is the case for our setup with \(T=10, r^{\text{VPN}}=0.9, r^-=0.5\)} under a strict privacy constraint on the IP address is to choose to activate the VPN on the first timestep, then choose any of the mirrors under the subsequent timesteps, resulting in a total reward of \((T-1)r^{\text{VPN}}\).
In our experiments we used \(n=4\).
We solve problem~\eqref{eq:main-dual-problem} with \(\epsilon_t = \infty\) (non-privacy constrained) and \(\epsilon_t = 0\) (privacy-constrained), using the model-based score-function estimator in equation \eqref{eq:model-gradient}.
In this simple setting, we use empirical frequencies of \(u_t, a_t\) over a minibatch to compute the joint probability distribution \(p(u_t, a_t)\).
For the dynamics model, we use the ground-truth dynamics. We use a two-layer multi-layer perceptron for the policy and a learned baseline.
We used JAX~\cite{bradburyJAXComposableTransformations2020} for all experiments.
Additional hyperparameters are in section \ref{sec:addit-exper-deta}.

\textbf{Results: } The policy learned under the privacy constraint does indeed exactly recover the globally optimal privacy-constrained policy described in the section above.
  This policy activates the VPN on the first timestep and then always connects to the same mirror, regardless of \(u_t\), so that the actions are independent of \(u_t\).
  The non-constrained policy always chooses the mirror which corresponds to the user's IP address, resulting in a \((u_t, a_t)\)-distribution where \(u_t\) is disclosed by \(a_t\).
  The average value of \(I(a_t;u_t)\) over the episode is 1.38 (i.e. \(\log 1 / 4 \approx 1.39\)) for the non-constrained policy and 0.0047 for the constrained policy.
  Full trajectory samples are shown in figure \ref{fig:tabular-result}. This experiment illustrates that the learned constrained policies do indeed reduce \(I(a_t;u_t)\), taking pre-emptive actions in order to maximize reward under the constraint.
\vspace{-0.2cm}  
\subsection{Comparison to Differentially Private Policies}
\begin{figure}\centering
  \includegraphics[width=0.48\textwidth]{./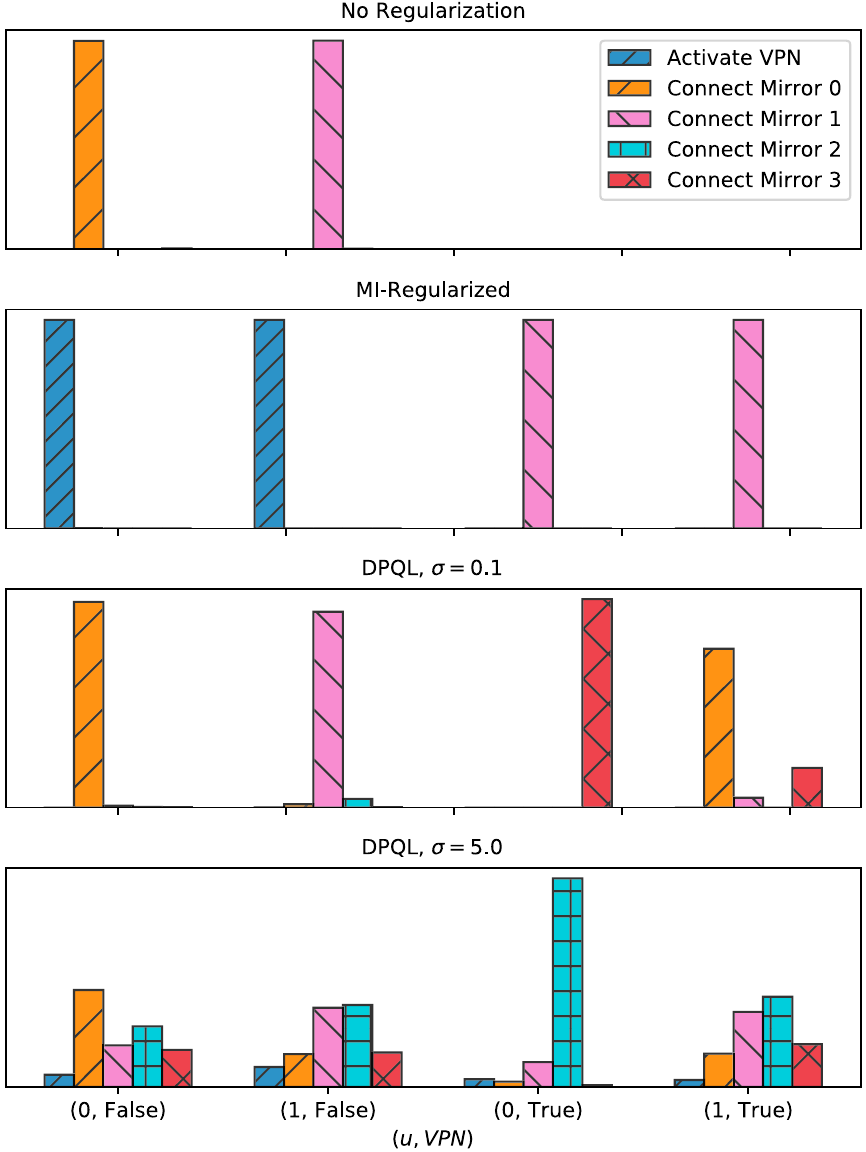}
  \caption{Action distribution in four states in the VPN MDP with four mirrors. Top to bottom: our approach with \(\boldsymbol{\lambda}=0\), \(\boldsymbol{\lambda}=1\), DPQL with \(\sigma=0.1\), \(\sigma=5.0\).}
\label{fig:dpql_actions}
\end{figure}
  We compare the policies learned with our mutual information constraint to an \((\epsilon, \delta)-\)differentially private Q-learning policy obtained via the DPQL algorithm described in \cite{wangPrivacypreservingQLearningFunctional2019}using their implementation\footnote{Found at \url{https://github.com/wangbx66/differentially-private-q-learning}}.
The provided implementation only supports a one-dimensional continuous state, so we reparameterize the VPN environment described in section \ref{sec:priv-intern-conn} with \(n=4\) mirrors, mapping the eight possible states (\(u=\{0, 1, 2, 3\}\), VPN=\(\{0, 1\}\)) to eight equal sub-regions of the interval \([0, 1]\).
The inputs to DPQL are a differential privacy budget \(\epsilon\) and a noise level \(\sigma\). As long as \(\sigma \geq \theta\), the mechanism is then \((\epsilon, \delta)\)-DP (in the sense described in section \ref{sec:demographic-parity}) where \(\theta, \delta\) depend on the batch size, learning rate, Lipschitz constant of the value approximator and other parameters. We compare several values of \(\sigma\) with \(\epsilon = 0.05\) against our method.

\textbf{Results}
A plot of the frequency of actions chosen in each state for the DPQL policy and our policy is given in figure \ref{fig:dpql_actions}.
Higher values of \(\sigma\) for DPQL lead to more noise injected into the \(Q\)-value and so a more random distribution of actions.
The higher values of \(\sigma\) for DPQL do lead to lower values of mutual information \(I(a_t;u_t)\), as the policy is more random.
However, the MI is not reduced exactly to zero. For the highest amount of noise, \(\sigma = 5.0\), the MI was \(0.11\) with reward \(7.5 \pm 0.3\), while our policy achieves a MI of \(\mathbf{0.004}\) with reward \(\mathbf{8.10 \pm 0.01}\) (the optimal reward under the mutual information constraint).
This is expected, since the DPQL approach is not aimed at satisfying a MI constraint. However, this does illustrate that the existing DP formulation is not especially suited to the problem of minimizing probabilistic disclosure of sensitive state variables. Our approach is able to take the feedback of the system into account and take preventative action to preserve privacy, while the DPQL approach simply adds noise to the policy.

\subsection{Comparison to MIRL}
We additionally compare to the MIRL approach introduced in \citet{grau2018soft}. MIRL finds a policy $\pi$ maximizing $\mathbb{E}_\pi [r] - I(s_t;a_t)/\beta$, i.e. regularizing the MI between all states and actions. In the original work the regularization is reduced to zero during training, which results in the optimal non-private policy being found. However, by fixing $\beta$, we can find an MI-regularized policy. We fix $\beta = 0.1$ and use a publicly-available re-implementation \footnote{\url{https://github.com/lcalem/reproduction-soft-qlearning-mutual-information}}. 

\textbf{Results}
The converged policy has a reward of $6.3 \pm 0.2 $ and a mutual information \(I(a_t;s_t)\) of $0.00049 \pm 0.0001$. As expected, the policy has a very low mutual information. However, it obtains a lower reward than our method. By inspection of trajectories, we observe that the final policy always selects a particular mirror (e.g. mirror 0), independently of the state. Analytically, this policy has a reward of 6.25 and an MI of 0, very close to that observed experimentally. This is lower than that obtained by our method, which first activates the VPN and then chooses arbitrarily, with total reward \(8.1\). Because MIRL constrains the mutual information between \emph{all} states and actions, our method's optimal policy is not chosen, as it has mutual information between the actions and the part of the state which denotes whether the VPN is active. This is acceptable for our approach since we have designated that part to be nonprivate. 

\subsection{Private Control in a Continuous Domain}
The second experiment is on a two-dimensional continuous control domain, illustrating the use of a learned discriminator  \(p_\psi(u_t|a_t)\), and an environment where \(u_t\) changes over the episode.
The agent controls a particle under Newtonian dynamics. The state is the coordinate positions \(x\) and \(u\), and velocities, \((x, \dot x, u, \dot u)\), the agent's actions impose a unit impulse in one of the four cardinal directions, and the reward is equal to \(-(x^2 + u^2)\).
We consider the state variable \(u\) to be sensitive. At each timestep a random isotropic Gaussian force is applied.
For this experiment we use a learned model to predict \(u_t\) given \(a_t\), an MLP parameterizing a Gaussian, conditional on the action at timestep \(t\).
Parameters of the predictor are updated on each iteration.
The prediction \(p(u_t)\) is a Gaussian at each timestep with the empirical moments of the sampled minibatch's trajectories.
The policy and critic architecture is the same as the previous experiment. Hyperparameters are in the appendix, section \ref{sec:addit-exper-deta}.

 \begin{figure}[h]
   \centering
   \hspace{-0.5cm}
   \vspace{-0.3cm}
  \begin{minipage}{0.25\textwidth}
    \includegraphics[width=\textwidth]{./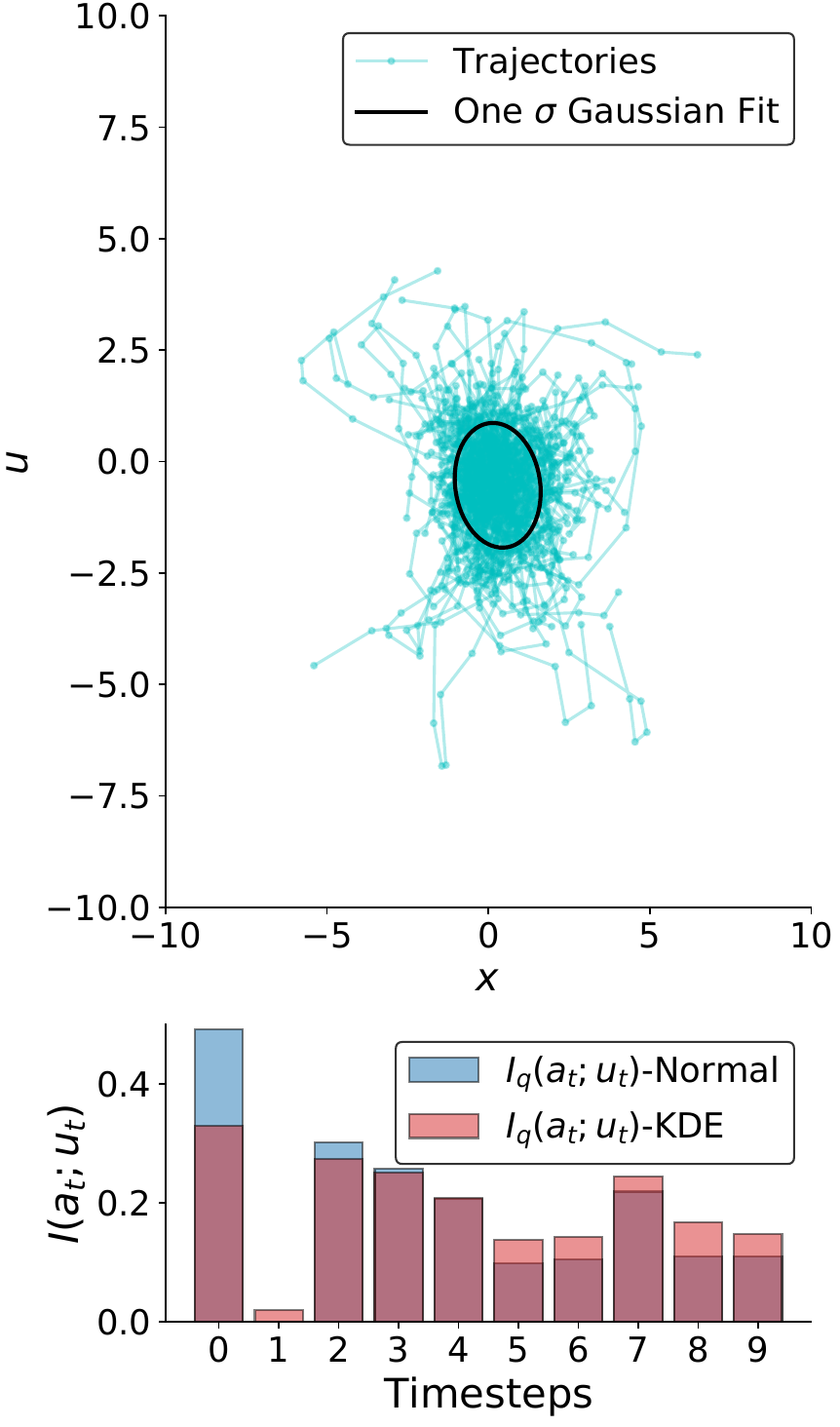}    
  \end{minipage}%
  \begin{minipage}{0.25\textwidth}
    \includegraphics[width=\textwidth]{./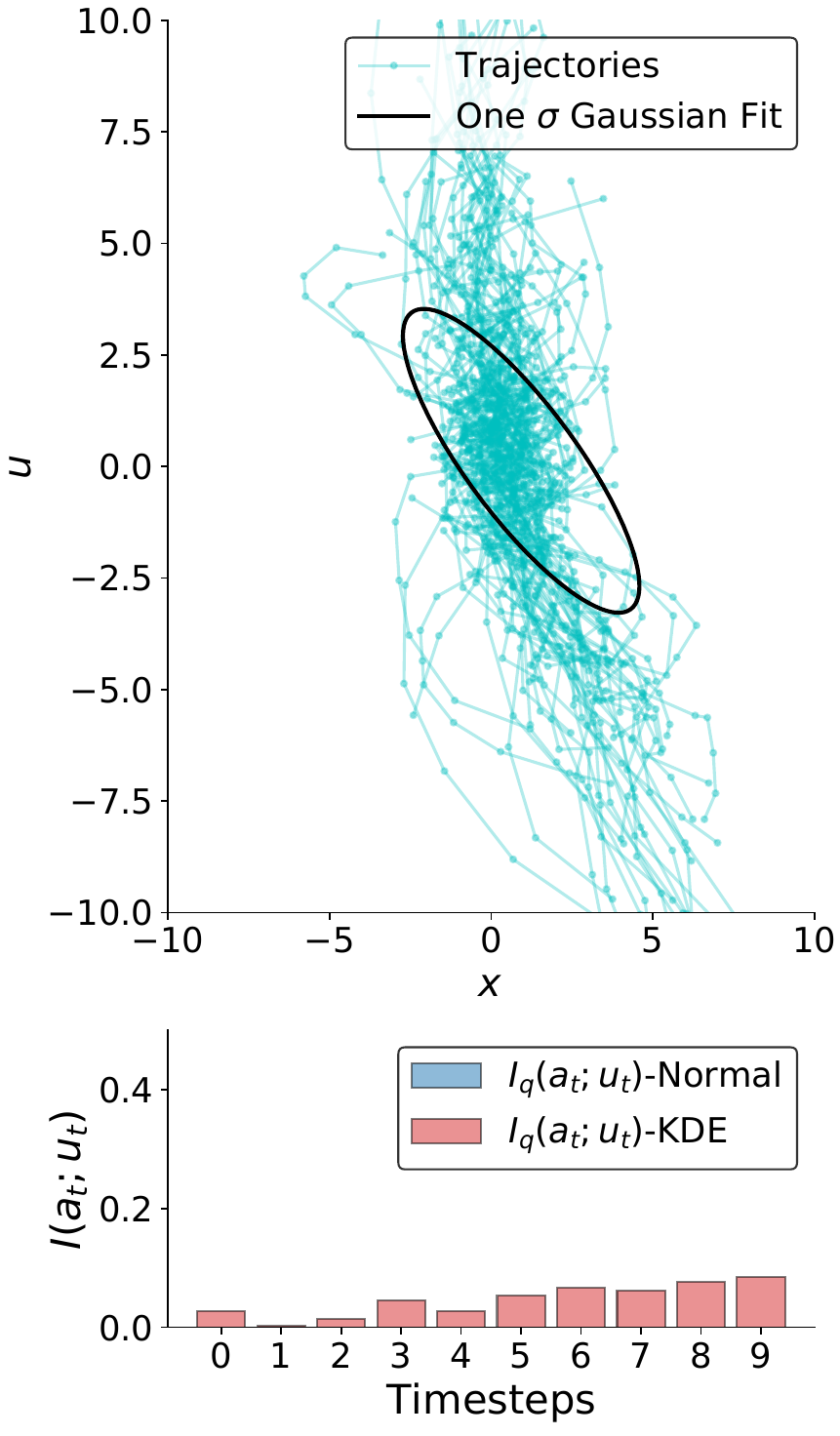}
  \end{minipage}
\caption{Trajectories for the 2d control task, \(u\)-unconstrained (left) and -constrained (right). The policy induces more variance in the \(u\)-direction in the constrained case, with less mutual information between \(a_t\) and \(u_t\). Our policy reduces the MI to zero when computed with a Gaussian discriminator, but this diverges from the MI as estimated by a nonparametric KDE at later timesteps as \(u\) is less Gaussian.}\label{fig:dynamics-result}  
\end{figure}

\begin{figure*}[ht]
   \centering
   \hspace{-0.5cm}
  \begin{minipage}{\textwidth}
    \includegraphics[width=\textwidth]{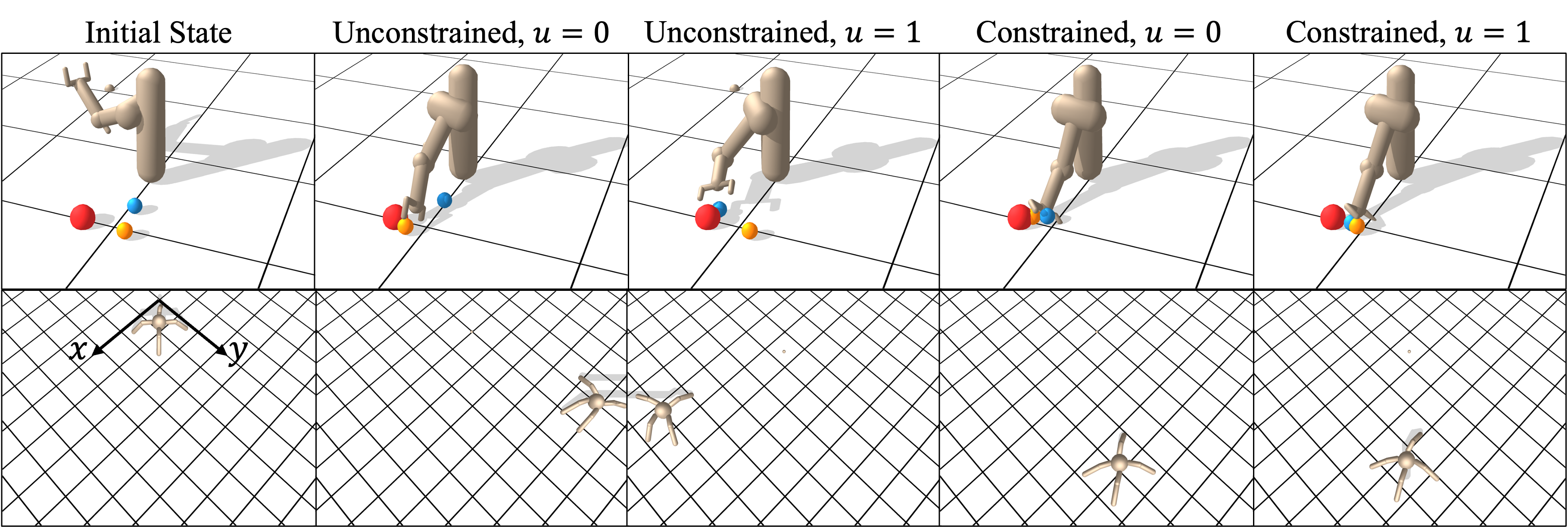}    
  \end{minipage}
  \caption{Simulated robotics policies trained with mutual information constraints. In \texttt{multi-pusher} (upper row), the goal is red; movable balls are cyan and orange. The unconstrained policy moves the active ball to the goal, while the constrained policy moves both balls to the goal. For \texttt{turning-ant} (lower row), the private policy moves diagonally while the unconstrained policy moves exactly in the direction of highest reward.}\label{fig:pusher-ant-qualitative}  
\end{figure*}

 \textbf{Results: }
The results are shown in figure \ref{fig:dynamics-result}. Trajectories from the \(u\)-constrained policy have a much higher variance in the \(u\)-direction, with correspondingly lower mutual information. Trajectories from the policy without the \(u\)-constraint have approximately equal variance in both directions.
The policy clearly trades off reward in order to satisfy the privacy constraint, as the constrained trajectories are on average further away from the center.
We also plot \(I(a_t;u_t)\), first assuming a Gaussian distribution (as is used for the discriminator in training), and then using a kernel density estimator, with no Gaussianity assumption. We see that our policy learns to reduce the mutual information under a Gaussian assumption to zero, but the true MI as measured by the KDE starts to increase at later timesteps, as the distribution of \(u\) becomes more non-Gaussian. 
\subsection{Control in a Differentiable Rigid-Body Simulator}
Here we show that our reparameterized method can be combined with modern RL algorithms such as PPO \citep{schulman2017proximal} on simulated robotics tasks with the Brax differentiable simulator \cite{freeman2021brax} to train complex policies which can hide sensitive states.
For the predictor \(p_\psi(a_{1:t};u_t)\) we use a transformer \cite{vaswani2017attention}. 
We find that the gradient norm grows with larger \(t\); to reduce it we use a surrogate predictor \(p_\psi(a_{t-k:t};u_t)\) during training, predicting \(u_t\) given the \(k\) previous actions. The predictor is a single model with \(t\)-specific positional embeddings and is trained alongside the policy. Architectural details and hyperparameters are given in section~\ref{sec:addit-exper-deta}. In this setting, instead of adjusting the Lagrange multiplier \(\lambda\) by coordinate descent, we use a PID controller, due to its success in constrained RL~\citep{stooke2020responsive}. The PID controller adjusts \(\lambda\) during training to satisfy the constraint \(\frac{1}{T}\sum_{t=1}^T I(a_{t-k:t};u_t) < \epsilon\). It is updated after each gradient step.

When computing the PPO loss, we subsample from the current batch of trajectories to obtain initial states, and then roll out \(k\) environment steps with the policy, adding the approximate MI multiplied by \(\lambda\) to the PPO loss. Gradients are computed via automatic differentiation, using Brax to allow automatic differentiation through the environment dynamics.

\begin{figure}[h!]
   \centering
  \begin{minipage}{0.25\textwidth}
    \includegraphics[width=\textwidth]{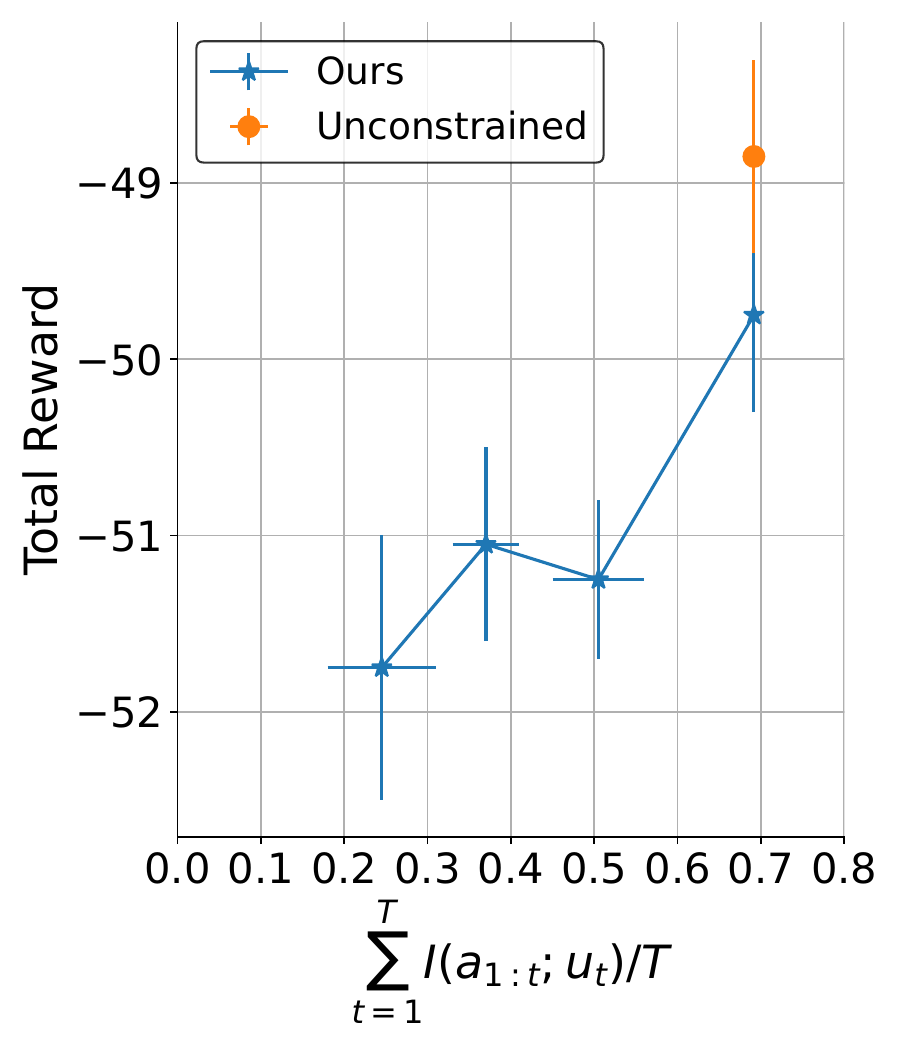}
  \end{minipage}%
  \begin{minipage}{0.25\textwidth}
    \includegraphics[width=\textwidth]{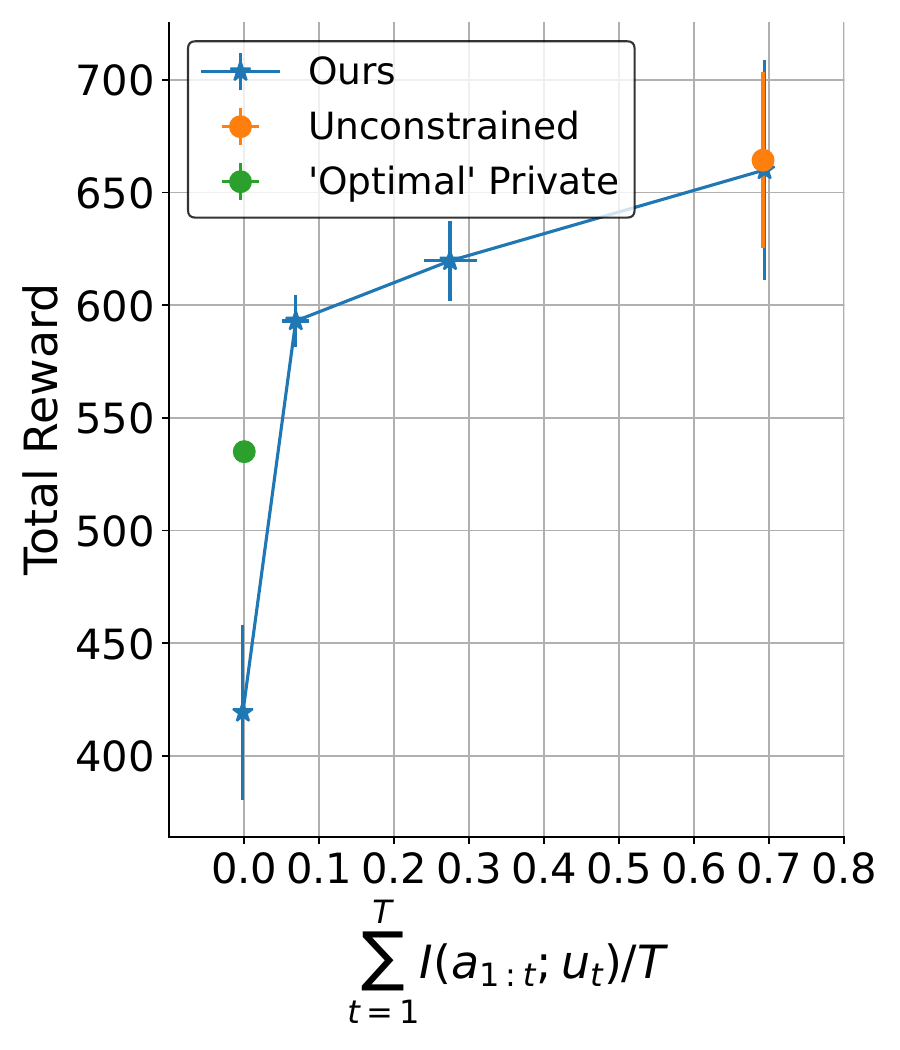}
    \end{minipage}
    \caption{Left: \texttt{Multi-Pusher}, Right: \texttt{Turning-ant}. Full-trajectory MI and reward for different levels of truncated MI constraint. In both MDPs, we find policies that reduce the disclosure of sensitive variables. In \texttt{multi-pusher}, we can reduce disclosure significantly with a minor drop in reward. In \texttt{turning-ant}, there is a trade-off between disclosure and reward.}
    \label{fig:pusher-ant-mi-reward}  
\end{figure}

We explore two classic environments, augmented with a new hidden state. First, the \texttt{multi-pusher} environment is similar to the \texttt{pusher} environment~\citep{todorov2012mujoco}, where a robot arm must move a ball to a goal. We add an additional ball and an additional binary observation which is randomly sampled at the start of the episode. This observation indicates which ball is `active'. The reward is calculated with respect only to the active ball.
Therefore, a naive policy would likely reveal which ball is the active ball, whereas hiding which ball is active requires a more sophisticated policy.

The \texttt{turning-ant} is an adaptation of the \texttt{ant} environment, where at the start of the episode we choose  to reward progress in either the \(x\) or \(y\) direction, with a corresponding binary observation. The unconstrained policy will reveal which direction is active, while a constrained policy must avoid disclosing this information.

\textbf{Results: }
Numerical results are shown in figure \ref{fig:pusher-ant-mi-reward}, for several different MI budgets \(\epsilon\). The constrained policies are able to effectively hide the sensitive state while retaining high reward. Furthermore, the truncated predictor is able to serve as an effective surrogate for the predictor trained on the whole trajectory.

Qualitative behavior of the policies is shown in figure \ref{fig:pusher-ant-qualitative}. In \texttt{multi-pusher}, the unconstrained policy pushes the active ball to the goal, revealing that this ball is active. In contrast, the constrained policy moves both balls to the goal. Figure \ref{fig:pusher-ant-mi-reward} shows this behavior can be achieved with a relatively small reduction in achieved reward. However, under the strictest truncated MI constraint, we do not achieve a full-trajectory MI of zero, illustrating potential limits of the truncated approach.

For \texttt{turning-ant}, we see that the unconstrained policy moves solely in the `active' direction, thereby revealing the identity of the sensitive parameter. In this case, we can conjecture that the optimal policy which does not reveal the sensitive parameter is to move in a diagonal direction. As shown in figure \ref{fig:pusher-ant-qualitative}, this is the policy which is found by our method. We observe that the trajectory actually arcs slightly towards the `active' direction as the episode progresses. This is due to the use of non-zero \(\epsilon\) constraint, and the difference between the truncated mutual information from \(p_\psi(a_{t-k:t};u_t)\) compared to the evaluation on \(p_\psi(a_{1:t};u_t)\). Since the policy is allowed \(\epsilon\) mutual information over a period of \(k\) steps as opposed to over \(t\) steps, the arc increases over the course of an episode for higher values of \(\epsilon\). In figure \ref{fig:pusher-ant-mi-reward} we show that the ant is able to achieve high reward while obscuring the hidden state. The `optimal' private reward is the theoretical reward if the ant were to move precisely diagonally at the same speed as in the unconstrained case. 

In section \ref{sec:control-variance} we show that the truncated predictor dramatically reduces the gradient norm when compared with the full-trajectory predictor, indicating that the truncated predictor is necessary for stable training.
\section{CONCLUSION}
\label{sec:discussion}
By minimizing the mutual information between sensitive state variables and actions, we can learn policies whose actions do not reveal the value of sensitive state variables, even in high dimensions and complex environments. Further development of this work could lead to increased trust between RL systems and users.

\section{ACKNOWLEDGEMENTS}
This research was supported by funding from the following: Stanford HAI, NSF(\#1651565), ARO (W911NF-21-1-0125), ONR (N00014-23-1-2159), and the CZ Biohub. We thank Jiaming Song, Daniel Levy, Kristy Choi and Andy Shih for valuable discussions and feedback on this research direction.

\bibliography{./bibliography}
\bibliographystyle{./icml2020}

\newpage
\section*{Checklist}
 \begin{enumerate}

 \item For all models and algorithms presented, check if you include:
 \begin{enumerate}
 \item A clear description of the mathematical setting, assumptions, algorithm, and/or model. [Yes]
 \item An analysis of the properties and complexity (time, space, sample size) of any algorithm. [No]
 \item (Optional) Anonymized source code, with specification of all dependencies, including external libraries. [Yes]
 \end{enumerate}

 \item For any theoretical claim, check if you include:
 \begin{enumerate}
 \item Statements of the full set of assumptions of all theoretical results. [Yes]
 \item Complete proofs of all theoretical results. [Yes]
 \item Clear explanations of any assumptions. [Yes]     
 \end{enumerate}

 \item For all figures and tables that present empirical results, check if you include:
 \begin{enumerate}
 \item The code, data, and instructions needed to reproduce the main experimental results (either in the supplemental material or as a URL). [Yes]
 \item All the training details (e.g., data splits, hyperparameters, how they were chosen). [Yes]
 \item A clear definition of the specific measure or statistics and error bars (e.g., with respect to the random seed after running experiments multiple times). [Yes]
 \item A description of the computing infrastructure used. (e.g., type of GPUs, internal cluster, or cloud provider). [Yes]
 \end{enumerate}

 \item If you are using existing assets (e.g., code, data, models) or curating/releasing new assets, check if you include:
 \begin{enumerate}
 \item Citations of the creator If your work uses existing assets. [Yes]
 \item The license information of the assets, if applicable. [Yes]
 \item New assets either in the supplemental material or as a URL, if applicable. [Yes]
 \item Information about consent from data providers/curators. [Not Applicable]
 \item Discussion of sensible content if applicable, e.g., personally identifiable information or offensive content. [Not Applicable]
 \end{enumerate}

 \item If you used crowdsourcing or conducted research with human subjects, check if you include:
 \begin{enumerate}
 \item The full text of instructions given to participants and screenshots. [Not Applicable]
 \item Descriptions of potential participant risks, with links to Institutional Review Board (IRB) approvals if applicable. [Not Applicable]
 \item The estimated hourly wage paid to participants and the total amount spent on participant compensation. [Not Applicable]
 \end{enumerate}

 \end{enumerate}

\newpage
\onecolumn
\appendix

\section{Appendix/Supplemental}
\subsection{Model-Based Gradient Estimation}
\label{sec:model-based-gradient}
If we have a model-based setup, we can estimate the gradient of the mutual information
constraint \(I_{q_\phi}(a_t;u_t)\) directly as follows, without introducing any \(x\) terms. We define
the approximate mutual information objective \(R(\psi)\). 
\begin{align}
  R(\psi) = \mathbb{E}_{a_t, u_t \sim q_\phi}\left[ \log p_\psi(u_t|a_t) - \log p_\psi(u_t) \right]. 
\end{align}
Now the derivative can be computed as
\begin{align}
  \nabla_\phi R(\psi) = \mathbb{E}_{a_t, u_t \sim q_\phi}\left[(\log p_\psi(u_t|a_t) - \log p_\psi(u_t))\nabla_\phi \log q_\phi(a_t, u_t)\right],
\end{align}
but we now have an issue where it's not clear how to obtain \(\nabla_\phi \log q_\phi(a_t, u_t)\), as we only typically have access to
\(q_\phi(a_t|x_t, u_t)\) while \(q_\phi(a_t, u_t) = \int_{x_t}q_\phi(a_t, x_t, u_t)dx_t = \int_{x_t}q_\phi(a_t|x_t, u_t)q_\phi(x_t, u_t)dx_t\).
Differentiating through the logarithm, we get 
\begin{align}
  \nabla_\phi \log q_\phi(a_t, u_t) & = \nabla_\phi \log \int_{x_t}q_\phi(a_t, x_t, u_t)dx_t.\\
                               & = \frac{\nabla_\phi\int_{x_t}q_\phi(a_t, x_t, u_t)dx_t}{q_\phi(a_t, u_t)}  \label{eq:lots-of-fracs}.
\end{align}
Now, we evaluate the numerator and get
\begin{align}
  \nabla_\phi\int_{x_t}q_\phi(a_t, x_t, u_t)dx_t & = \int_{x_t}\nabla_\phi q_\phi(a_t, x_t, u_t)dx_t\\
                                               & = \int_{x_t}q_\phi(a_t, x_t, u_t)\nabla_\phi \log q_\phi(a_t, x_t, u_t)dx_t\\
                                               & = \mathbb{E}_{x_t \sim q_\phi(\cdot|u_t, a_t)}\left[q_\phi(u_t, a_t)(\nabla_\phi \log q_\phi(a_t|x_t, u_t) + \nabla_\phi \log q_\phi(x_t,u_t))\right],
\end{align}
and we observe that the \(q_\phi(a_t,u_t)\) term cancels with the denominator in equation \ref{eq:lots-of-fracs},
so we have
\begin{align}
  \nabla_\phi R(\psi) = \mathbb{E}_{a_t, u_t \sim q_\phi}\left[R_\psi(u_t, a_t)\mathbb{E}_{x_t \sim q_\phi(\cdot|u_t, a_t)}\left[\nabla_\phi \log q_\phi (a_t|x_t, u_t) + \nabla_\phi \log q_\phi (x_t, u_t)\right]\right]
\end{align}
Then, we want to find 
\begin{align}
  \nabla_\phi \log q_\phi(x_t, u_t) = \frac{\nabla_\phi q_\phi(x_t, u_t)}{q_\phi(x_t, u_t)}
\end{align}
with
\begin{align}
  q_\phi(x_t, u_t) = & \int_{a'_t, x_{t-1}, u_{t-1}}q_\phi(a'_t, x_t, x_{t-1}, u_t, u_{t-1})da'_{t}dx_{t-1}du_{t-1}\\
  = & \int_{a'_t, x_{t-1}, u_{t-1}}q_\phi(a'_t|x_t, u_t, u_{t-1})q_\phi(x_t, u_t|x_{t-1}, u_{t-1})q_\phi(x_{t-1}, u_{t-1})da'_{t}dx_{t-1}du_{t-1}.
\end{align}
where we use the Markov property \(q_\phi(a_t|x_{t}, x_{t-1}, u_t, u_{t-1}) = q_\phi(a_t|x_t, u_t)\). So
\begin{align}
  \nabla q_\phi(x_t, u_t) = & \nabla_\phi \int_{a'_t, x_{t-1}, u_{t-1}}q_\phi(a'_t, x_t, x_{t-1}, u_t, u_{t-1})da'_{t}dx_{t-1}du_{t-1} \\
  = & \int_{a'_t, x_{t-1}, u_{t-1}}q_\phi(a'_t, x_t, x_{t-1}, u_t, u_{t-1})\nabla_\phi \log q_\phi(a'_t, x_t, x_{t-1}, u_t, u_{t-1}) da'_tdx_{t-1}du_{t-1} \\
  = & q_\phi(x_t, u_t)\int_{a'_t, x_{t-1}, u_{t-1}}q_\phi(a'_t, x_{t-1}, u_{t-1}|x_t, u_t)\left[\nabla_\phi \log q_\phi(a'_t|x_t, u_t) \right.\\
                          & \left. + \nabla_\phi \log q_\phi(x_t, u_{t-1}|x_{t-1},u_t) + \nabla_\phi \log q_\phi(x_{t-1}, u_{t-1})\right] da'_tdx_{t-1}du_{t-1}.
\end{align}
So
\begin{align}
  \nabla_\phi \log q_\phi(x_t, u_t) = & \int_{a'_t, x_{t-1}, u_{t-1}}q_\phi(a'_t, x_{t-1}, u_{t-1}|x_t, u_t)\left[\nabla_\phi \log q_\phi(a'_t|x_t, u_t) \right.\label{eq:cancel-policy}\\
                                 & \left. + \nabla_\phi \log q_\phi(x_t, u_t|x_{t-1},u_{t-1}) + \nabla_\phi \log q_\phi(x_{t-1}, u_{t-1})\right] da'_tdx_{t-1}u_{t-1}\\
  = & \mathbb{E}_{a'_t, x_{t-1}, u_{t-1} \sim q_\phi(\cdot, \cdot|x_t, u_t)}\left[\nabla_\phi \log q_\phi(x_t,u_t|x_{t-1},u_{t-1}) + \nabla_\phi \log q_\phi(x_{t-1}, u_{t-1})\right],\\
  = & \mathbb{E}_{x_{t-1}, u_{t-1} \sim q_\phi(\cdot|x_t, u_t)}\left[\nabla_\phi \log q_\phi(x_t, u_t|x_{t-1},u_{t-1}) + \nabla_\phi \log q_\phi(x_{t-1}, u_{t-1})\right],\\    
\end{align}
where the first term on line \ref{eq:cancel-policy} is zero by the fact that \(\mathbb{E}_{a \sim q_\phi}\left[\nabla_\phi \log q_\phi(a)\right] = 0\).
Now we have
\begin{align}
  q_\phi (x_t, u_t|x_{t-1}, u_{t-1}) = \int_{a'_{t-1}}p(x_t, u_t|a'_{t-1}, x_{t-1}, u_{t-1})q_\phi(a'_{t-1}|x_{t-1}, u_{t-1})da'_{t-1}.
\end{align}
So
\begin{align}
  \nabla_\phi \log q_\phi (x_t|x_{t-1}, u_{t-1}) = & \frac{1}{q_\phi(x_t, u_t|x_{t-1}, u_{t-1})} \int_{a'_{t-1}}p(x_t, u_t|a'_{t-1}, x_{t-1}, u_{t-1})\\& \qquad \cdot q_\phi(a'_{t-1}|x_{t-1}, u_{t-1})\nabla_\phi \log q_\phi(a'_{t-1}|x_{t-1}, u_{t-1})da'_{t-1}\\
  = & \frac{1}{q_\phi(x_t, u_t|x_{t-1}, u_{t-1})} \mathbb{E}_{a'_{t-1}\sim q_\phi (\cdot|x_{t-1}, u_{t-1})}\left[p(x_t, u_t'|a'_{t-1}, x_{t-1}, u_{t-1}) \right. \\
  & \qquad \left. \cdot \nabla_\phi \log q_\phi(a'_{t-1}|x_{t-1}, u_{t-1})\right].
\end{align}

So our expression for the gradient is
\begin{align}
  & \nabla_\phi R(\psi) = \\
  & \mathbb{E}_{a_t, u_t \sim q_\phi}\left[R_\psi(u_t, a_t)\mathbb{E}_{x_t \sim q_\phi(\cdot|u, a_t)}\left[
    \nabla_\phi\log q_\phi(a_t|x_t, u_t)\right.\right. \\
  & \left. \left. + \mathbb{E}_{x_{t-1}, a_{t-1}, u_{t-1} \sim q_\phi(\cdot, \cdot|x_t, u_t)}\left[\frac{p(x_t, u_t|a_{t-1}, x_{t-1}, u_{t-1})}{q_\phi(x_t, u_t|x_{t-1}, u_{t-1})}\nabla_\phi \log q_\phi(a_{t-1}|x_{t-1}, u_{t-1}) + \nabla_\phi \log q_\phi(x_{t-1}, u_{t-1})\right]\right]\right].
\end{align}

By repeating the decomposition we have
\begin{align}
  & \nabla_\phi \mathbb{E}_{a_t, ut_t \sim q_\phi}\left[\log p_\psi(u_t|a_t) - \log p(u_t)\right] = \\
  & \mathbb{E}_{a_t, u_t \sim q_\phi}\left[R_\psi(u_t, a_t)\mathbb{E}_{x_t \sim q_\phi(\cdot|u_t, a_t)}\left[
    \nabla_\phi\log q_\phi(a_t|x_t, u_t)\right.\right. \\
  & \left. + \mathbb{E}_{\tau_{x_{1:t-1}}, \tau_{u_{1:t-1}}, \tau_{a_{1:t-1}}\sim q_\phi(\cdot, \cdot|x_t, u_t)}\left[ \right.\right. \\
  & \left. \left. \sum_{t'=1}^{t'=t-1}\frac{p(x_{t'+1}, u_{t' + 1}|a_{t'}, x_{t'}, u_{t'})}{q_\phi(x_{t'+1}, u_{t'+1}|x_{t'}, u_{t'})}\nabla_\phi \log q_\phi(a_{t'}|x_{t'}, u_{t'})\right]\right],
\end{align}

What do we need to compute this gradient estimator?
\begin{itemize}
\item{We need to be able to sample from \(\tau_{x_{1:t-1}}, \tau_{a_{1:t-1}}, \tau_{u_{1:t-1}}\), which we can get from trajectory samples.}
\item{We need to be able to compute \(q_\phi(x_t, u_t|x_{t-1}, u_{t-1})\), and \(p(x_t, u_t|a_{t-1}, x_{t-1}, u_{t-1})\).
    In practice we can use \(p(x_t, u_t|a_{t-1}, x_{t-1}, u_{t-1})\) to compute \(q_\phi(x_t, u_t|x_{t-1}, u_{t-1})\) because
  \(q_\phi(x_t, u_t|x_{t-1}, u_{t-1}) = \mathbb{E}_{a_{t-1}\sim q_\phi(\cdot|x_{t-1}, u_{t-1})}\left[p(x_t, u_t|x_{t-1},a_{t-1}, u_{t-1})\right]\)}
\end{itemize}

Of course a special case is \(t=1\), where we have \(\nabla_\phi p(x_1, u_1) = 0\), so
\begin{align}
  & \nabla_\phi R_1(\phi) = \\
  & \mathbb{E}_{a_1, u_1 \sim q_\phi}\left[R_\psi(u_1, a_1)\mathbb{E}_{x_1, \sim q_\phi(\cdot|u_1, a_1)}\left[\nabla_\phi\log q_\phi(a_1|x_1, u_1)\right]\right].
\end{align}

with \(R_\psi(u_1,a_1) = \left[(\log q_\psi(u_1|a_1) - \log p(u_1))\right]\).

\newpage
\subsection{\(I(\tau_a;\tau_u)\) Regularizer}
\label{sec:itau_a-tau_u-regul}
As discussed in the main body, another possible threat model is an adversary aiming to infer the whole trajectory of sensitive states \(\tau_u\)
from the whole trajectory of actions \(\tau_a\). We sketch out a basis for forming an estimator for \(\nabla_\phi I_{q_{\phi}}(\tau_a;\tau_u)\).
We want to compute
\begin{align}
  &\nabla_\phi \mathbb{E}_{\tau_a, \tau_u \sim q_\phi}\left[I_{q_{\phi}}(\tau_a;\tau_u)\right]\\
  & = \int_{\tau_a, \tau_u}\left[\log p_\psi(\tau_u|\tau_a) - \log p_\psi(\tau_u)\right] \nabla_\phi q_\phi(\tau_a, \tau_u)d\tau_ad\tau_u\\
  & = \mathbb{E}_{\tau_a, \tau_u \sim q_\phi}\left[\left(\log p_\psi(\tau_u|\tau_a) - \log p_\psi(\tau_u)\right) \nabla_\phi \log q_\phi(\tau_a, \tau_u)\right].
\end{align}
As before, the difficulty arises in computing \(\nabla_\phi q_\phi(\tau_a, \tau_u)\) which involves a marginalization over
the non-sensitive state \(\tau_x\).
  Now 
  \begin{align}
    q_\phi(\tau_a, \tau_u) = \int_{\tau_x}q_\phi(\tau_a, \tau_x, \tau_u)d\tau_x.
  \end{align}
  For conciseness, we write \(p(x_{t+1}, u_{t+1}|x_t, u_t, a_t)q_\phi(a_{t+1}|x_{t+1},u_{t+1}) = q_\phi(x_{t+1}, u_{t+1}, a_{t+1}|x_t, u_t, a_t)\).
  Also note that
  \begin{align}
    \nabla_\phi q_\phi(x_{t+1}, u_{t+1}, a_{t+1}|x_t, u_t, a_t) & = p(x_{t+1}, u_{t+1}|a_t, x_t, u_t)\nabla_\phi q_\phi(a_{t+1}|x_{t+1},u_{t+1})\\
                                                       & = q_\phi(x_{t+1}, u_{t+1}, a_{t+1}|x_t, u_t, a_t)\nabla_\phi \log q_\phi(a_{t+1}|x_{t+1},u_{t+1}).
  \end{align}
  We then have
    \begin{align}
      q_\phi(\tau_a, \tau_u) = \int_{\tau_x}q_\phi(x_T, u_T, a_T|x_{T-1}, u_{T-1}, a_{T-1})q_\phi(\tau_{a_{1:T-1}}, \tau_{x_{1:T-1}}, \tau_{u_{1:T-1}})d\tau_x,     
    \end{align}
    so
    \begin{align}
      \nabla_\phi q_\phi(\tau_a, \tau_u) = & \int_{\tau_x}q_\phi(\tau_{a_{1:T-1}}, \tau_{x_{1:T-1}}, \tau_{u_{1:T-1}}) \nabla_\phi q_\phi(x_T, u_T, a_T|x_{T-1}, u_{T-1}, a_{T-1}) \\
                                           & + q_\phi(x_T, u_T, a_T|x_{T-1}, u_{T-1}, a_{T-1}) \nabla_\phi q_\phi(\tau_{a_{1:T-1}}, \tau_{x_{1:T-1}}, \tau_{u_{1:T-1}}) d\tau_x\\
                                             =  & \int_{\tau_x}q_\phi(\tau_{a_{1:T}}, \tau_{x_{1:T}}, \tau_{u_{1:T}}) \nabla_\phi \log q_\phi(a_T|x_T, u_T) \\
                                           & + q_\phi(x_T, u_T, a_T|x_{T-1}, u_{T-1}, a_{T-1}) \nabla_\phi q_\phi(\tau_{a_{1:T-1}}, \tau_{x_{1:T-1}}, \tau_{u_{1:T-1}}) d\tau_x\\
                                           = & \ q_\phi(\tau_a, \tau_u)\mathbb{E}_{\tau_x\sim q_\phi(\cdot|\tau_a, \tau_u)}\left[\nabla_\phi \log q_\phi(a_T|x_T, u_T) \right.\\
                                           & + \left. \nabla_\phi \log q_\phi (\tau_{a_{1:T-1}}, \tau_{x_{1:T-1}}, \tau_{u_{1:T-1}})\right]\\
    \end{align}
    Now
    \begin{align}
      & \mathbb{E}_{\tau_x\sim q_\phi(\cdot|\tau_a, \tau_u)}\left[\nabla_\phi \log q_\phi (\tau_{a_{1:T-1}}, \tau_{x_{1:T-1}}, \tau_{u_{1:T-1}})\right]\\
      & \ = \mathbb{E}_{\tau_x\sim q_\phi(\cdot|\tau_a, \tau_u)}\left[\nabla_\phi \log q_\phi (\tau_{x_{1:T-1}}| \tau_{a_{1:T-1}}, \tau_{u_{1:T-1}}) + \log q_\phi (\tau_{a_{1:T-1}}, \tau_{u_{1:T-1}})\right]\\
      & \ = \mathbb{E}_{\tau_x\sim q_\phi(\cdot|\tau_a, \tau_u)}\left[\nabla_\phi \log q_\phi (\tau_{a_{1:T-1}}, \tau_{u_{1:T-1}})\right],
    \end{align}
    by the fact that \(\mathbb{E}_{a \sim q_\phi}\left[\nabla_\phi \log q_\phi(a)\right] = 0\).
    And so we have a reduction from \(\nabla_\phi q_\phi(\tau_a, \tau_u)\) to  \(\nabla_\phi q_\phi(\tau_{a_{1:T-1}}, \tau_{u_{1:T-1}})\), similarly to \(q(a_t,u_t)\) case in the section above.
    We can repeat this to form an estimator of \(\nabla_\phi I_{q_\phi}(\tau_a;\tau_u)\).

    \newpage
    \subsection{Proof for theorem \ref{sec:dual-formulation}}
    We consider an MDP where \(u_t\) is independent of the actions. This could be because the hidden state changes randomly at each timestep, or (more practically relevant) because the hidden state is randomly chosen at the start of the episode and the agent must avoid leaking the information about the fixed \(u\). For the purposes of this proof, we  assume that the constraint \(\epsilon_t\) is strictly nonzero, as otherwise Slater's condition does not hold. This is consistent with our description in e.g. equation \eqref{eq:main-problem}, where we use a strict inequality.

    As a preliminary, we note the convexity of the mutual information over the conditional distribution: if we have three distributions \(p_1(x,y) = p(x)p_1(y|x)\), \(p_2(x,y) = p(x)p_2(y|x)\), \(p_3(x,y) = p(x)(\lambda p_1(y|x) + (1 - \lambda)p_2(y|x))\), then \(I_3 \leq \lambda I_1 + (1-\lambda)I_2\) for the corresponding mutual informations. For proof, see e.g.~\citet{coverElementsInformationTheory1991}.
    
    We will follow the proof approach from \citet{paternain2019constrained}.
    First, the perturbation function \(P\) is defined as follows:
    \begin{align*}
      P(\boldsymbol{\xi}) = \underset{\phi}{\text{Max}} \ \underset{\tau \sim q_\phi}{\mathbb{E}}\left[\sum_{t=1}^Tr(x_t,u, a_t)\right], \quad \text{s.t.} \ I(a_t;u_t) < \epsilon_t -\xi_t \ \forall t.
    \end{align*}
    From \citet{paternain2019constrained}, theorem 1, an optimization problem satisfying Slater's condition and a concave perturbation function has zero duality gap. 
    Slater's condition is clearly satisfied, as there is a set of policies which have exactly zero mutual information (all random policies). To show the concavity of the perturbation function, we must show that 
    \begin{align*}
      P(\mu \xi^1 + (1-\mu)\xi^2) \geq \mu P(\xi^1) + (1-\mu)P(\xi^2).
    \end{align*}
    In other words, for a policy \(\pi_1\) that attains reward \(R_1\), the maximum for \(\xi^1\), and a policy \(\pi_2\) that attains reward \(R_2\), the maximum for \(\xi^2\), the reward-maximizing policy for constraint \(\mu\xi^1 + (1-\mu)\xi^2\) must have reward at least \(\mu R_1 + (1-\mu)R_2\). We will do this by finding a policy \(\pi_\text{mix}\) which satisfies the mutual information constraint for \(\mu\xi^1 + (1-\mu)\xi^2\)  and has reward equal to \(\mu R_1 + (1-\mu)R_2\). Note that in cases where \(\xi^1\) or \(\xi^2\) results in an infeasible constraint, \(P(\xi^1)\) or \(P(\xi^2)\) is \(-\infty\) and we satisfy the constraint.

    We now introduce the discounted occupancy measure \(\rho_t(x_t, u_t, a_t) = \gamma^t \pi_t(a_t |x_t,u_t) p_t(x_t,u_t)\). This gives the probability that at time \(t\), the agent is in state \((x_t,u_t)\), and chooses action \(a_t\).
    We can write the reward as an expectation: \(R = \mathbb{E}_{(x_t, u_t, a_t) \sim \rho_t(x_t, u_t, a_t)}\left[ r(x_t, u_t, a_t) \right]\). 
    Now, the space of occupancy measures is convex, so for two policies \(\pi_1\), \(\pi_2\), with occupancy measures \(\rho_1\), \(\rho_2\), there exists a policy \(\pi_\text{mix}\) with occupancy measure \(\rho_\text{mix} = \mu \rho_1 + (1-\mu)\rho_2\). This policy, which we denote \(\pi_\text{mix}\), achieves reward \(R_\text{mix} = \mu R_1 + (1-\mu)R_2\), due to the linearity of expectation with the expression for the reward above. To prove convexity of \(P\), all that remains is to show that \(I_{q_\text{mix}}(a_t; u_t) \leq \epsilon_t - (\mu \xi_t^1 + (1-\mu)\xi_t^2\)).

    Observe that
    \begin{align*}
      P_\text{mix}(a_t, u_t) = \int_\mathcal{X} P_\text{mix}(a_t, x_t, u_t)dx = \int_\mathcal{X} \mu P_1(a_t, x_t, u_t) + (1 - \mu) P_2(a_t, x_t, u_t)dx = \mu P_1(a_t, u_t) + (1 - \mu) P_2(a_t,  u_t).
    \end{align*}

    Now, 
    \begin{align*}
      I_{q_\text{mix}}(a_t; u_t) = \mathbb{E}_{a_t, u_t \sim q_\text{mix}}\left[\log q_\text{mix}(a_t | u_t) - \log q(u_t)\right],
    \end{align*}
    where we have written \(\log q(u_t)\) with no additional quantifiers since we are assuming that \(q(u_t)\) doesn't depend on the policy.
    As described above, the mutual information is convex in the conditional distribution, so we have

    \begin{align*}
      I_{q_\text{mix}}(a_t; u_t) \leq \mu I_1 + (1-\mu)I_2 \leq \epsilon_t -(\mu\xi^1_t + (1-\mu)\xi^2_t).
    \end{align*}
    Therefore, the policy \(p_\text{mix}\) is a feasible policy for the constraint \(\mu\xi^1_t + (1-\mu)\xi^2_t\) and achieves a reward of at least \(\mu R_1 + (1-\mu)R_2\). Therefore, the perturbation function is concave and via the result of \citet{paternain2019constrained}, the problem exhibits strong duality.

    \newpage
    \subsection{Additional Experiments}
    \subsubsection{Figures for Internet Connectivity Example}
    Trajectories for the internet connectivity example are shown in figure \ref{fig:tabular-result}.
\begin{figure*}[h]
  \begin{minipage}{0.48\textwidth}
    \includegraphics[width=\textwidth]{./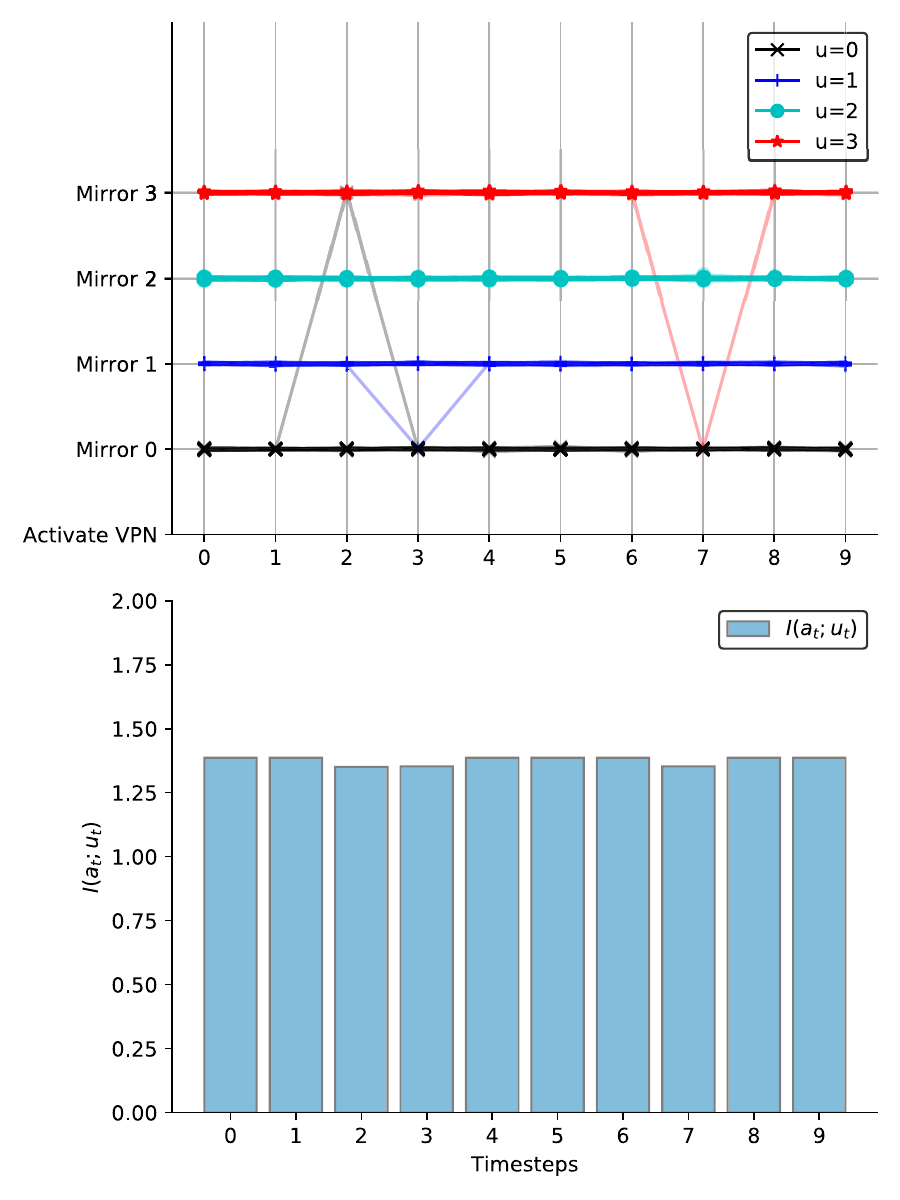}    
  \end{minipage}%
  \begin{minipage}{0.49\textwidth}
    \includegraphics[width=\textwidth]{./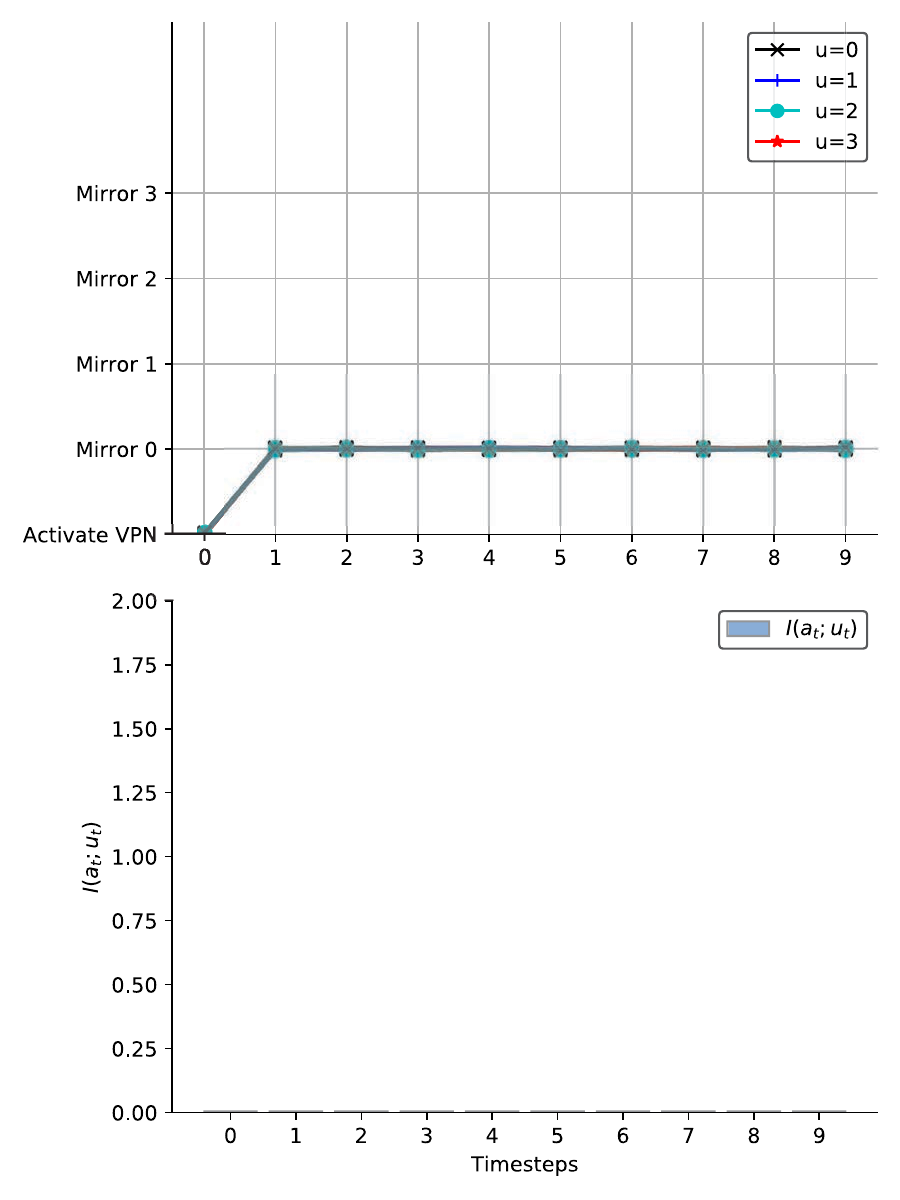}
  \end{minipage}%
  \caption{Trajectories from the internet connectivity environment. One the left we have a Lagrange multiplier \(\boldsymbol{\lambda} = \boldsymbol{0}\), while on the right we have \(\boldsymbol{\lambda} = \boldsymbol{1}\). We see that the trajectories in the constrained case are able to completely remove the mutual information between the action and the sensitive state by choosing a policy of always activating the VPN and then choosing mirror 0.}  \label{fig:tabular-result}
\end{figure*}

\newpage
\subsubsection{Controlling Individual Timesteps' Mutual Information}
\label{sec:contr-indiv-timest}
In this experiment we compare the model-based estimator and the model-free estimator, in addition to demonstrating
the control over individual timesteps' mutual information.
We consider an idealized customer-service problem. In this problem, our agent controls a personalized goods
distribution system, delivering goods to a specific person.  The person has a location \(x \in
\mathbb{R}\), obeying a random walk where \(x_{t+1} = x_t + \epsilon\), for \(\epsilon \sim \mathcal{N}(0,\sigma^2)\), and
the agent has a `service center' which is on a lattice \(w \in \mathbb{Z}\) of possible locations
from which deliveries are sent out.  At each timestep, the agent is told the location of the person,
and updates the location of its service center, either increasing it by 1 or decreasing it by 1. The
agent then receives reward \(r(x_t, a_t, w_t) = -|x_t - w_t|\), being penalized for how far the
service center is from the person. To make this a privacy-constrained problem, we suppose that
there is an underlying sensitive binary variable \(u \in \{0,1\}\), which heavily influences the
initial position of the client. The sensitive variable \(u\) is constant over the episode.
In particular, we choose \(p(x_1, u)\) as \(p(u)p(x_1|u)\)
with \(u \sim \text{Uniform}\ \{0,1\}\), \(x \sim \mathcal{N}(2u, \sigma_0^2)\). For our
experiments we used \(\sigma^2 = 1/4\) and \(\sigma_0^2 = 0.5\). Since this environment is \(u\)-shielded
per our definition in the appendix \ref{sec:u-shielded}, where \(u\) also impacts the dynamics, with the update \(x_{t+1} = x_t + u\alpha + \epsilon\). We used \(\alpha = 0.3\) in cases where we use this variant. 

\subsubsection{Results}
\textbf{Customer Service}
The results of the customer service experiments are shown in figures \ref{fig:timestep-customer}, \ref{fig:global-customer}.
The experiments using the model-based estimator in equation \ref{eq:model-gradient} in figure \ref{fig:timestep-customer}
show that the model-based method is indeed able to selectively constrain the
value of \(I(u;a_t)\) by choosing the right lagrange multipliers in equation \ref{eq:main-dual-problem}.
The trajectories match the intuitive expectations of the constraints: the unconstrained agent draws the trajectories as
close to zero as quickly as possible to maximize reward. However the constrained agent is not able to
do this, as it would reveal the protected variable. The agent that is heavily constrained on the first
timestep moves both groups down, even though this results in less reward for the blue group: once at a
timestep where it isn't constrained, it moves both groups back towards zero. 

We also present the model-free approach in figure \ref{fig:global-customer}, evaluating on the
non-\(u\)-shielded version of the customer service example. Although harder to interpret the
behaviour due to the global nature of the constraint over the whole sequence of actions, we see that
the constrained agent takes a similar approach in choosing similar distributions of actions for both
groups, while drawing the two groups gradually closer to the origin. Again, the unconstrained agent simply
draws both groups to the origin immediately. Examining the trajectory-level mutual information, we
see that the constrained agent has \(I(u;\tau_a, \tau_x) \approx 0.35\), while for the unconstrained
agent it is approximately \(0.45\). 

\begin{figure*}[h]
  \label{fig:customer-results}
  \hspace{-0.5cm}
  \begin{minipage}{0.33\textwidth}
    \includegraphics[width=\textwidth]{./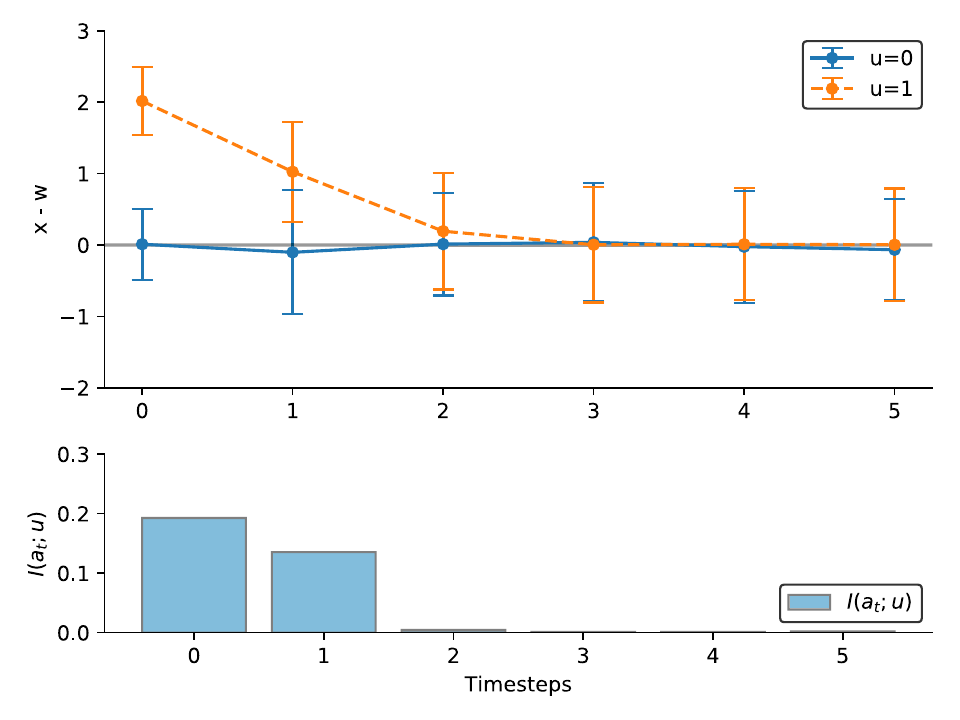}
  \end{minipage}%
  \begin{minipage}{0.32\textwidth}
    \includegraphics[width=\textwidth]{./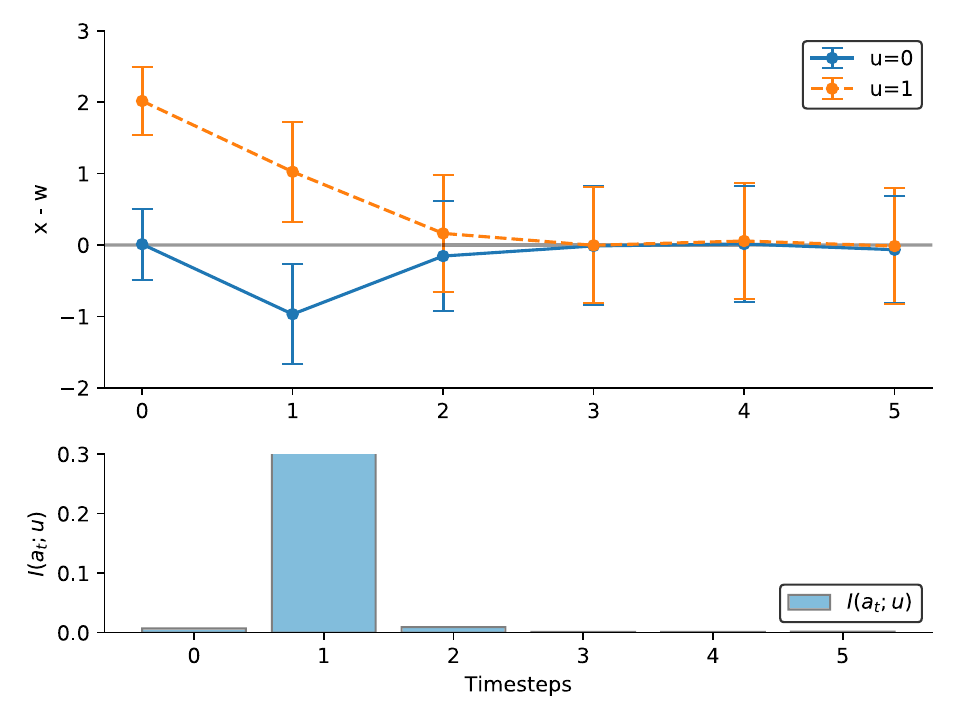}
  \end{minipage}%
  \begin{minipage}{0.33\textwidth}
    \includegraphics[width=\textwidth]{./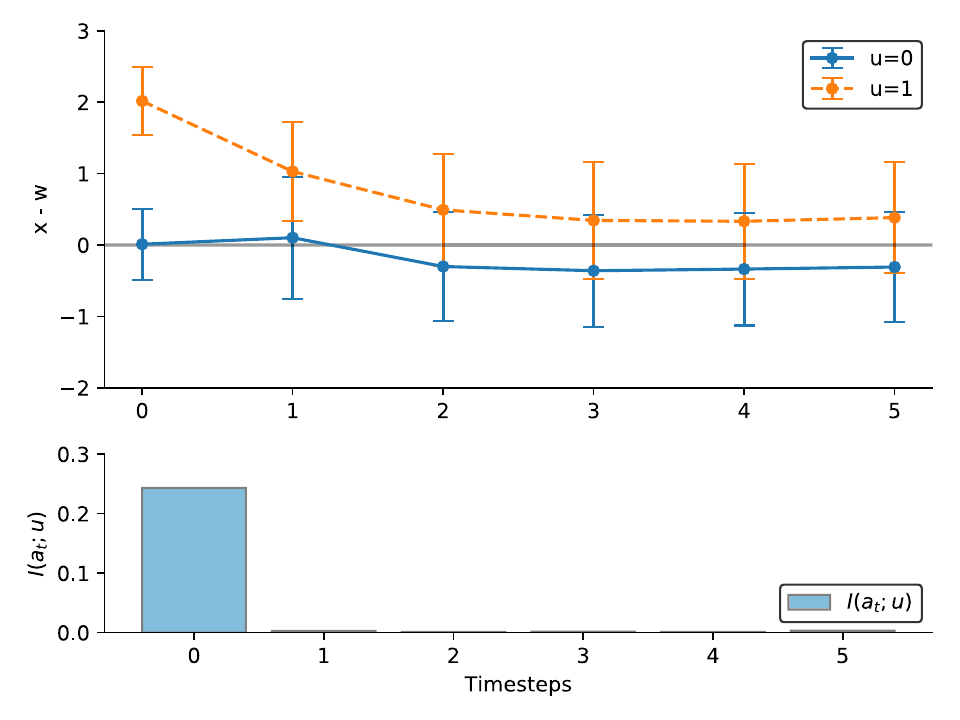}
  \end{minipage}%
  \caption{Trajectories and values of \(I(a_t;u)\) generated from different choices of Lagrange
    multipliers in equation \ref{eq:main-dual-problem} on the customer service problem, for the two
    different protected groups. The left shows
    \(\boldsymbol{\lambda} = \boldsymbol{0}\), the middle shows \(\boldsymbol{\lambda} = (10, 0, \ldots,
    0)\), and the right shows \(\boldsymbol{\lambda} = (0, 10, \ldots, 10)\). We get precise control
    over \(I(a_t;u)\) with the choices of \(\boldsymbol{\lambda}\). For example, in the middle trajectory we see
    that a large Lagrange multiplier on the first timestep's constraint forces the agent to treat both
    groups exactly the same, even at the expense of reward by moving away from zero. Once the constraint
    is removed, the agent exposes a large amount of information about the protected groups with its subsequent
  action.}
  \label{fig:timestep-customer}
\end{figure*}

\begin{figure}[h]
\centering
  \begin{minipage}{0.48\textwidth}
    \includegraphics[width=\textwidth]{./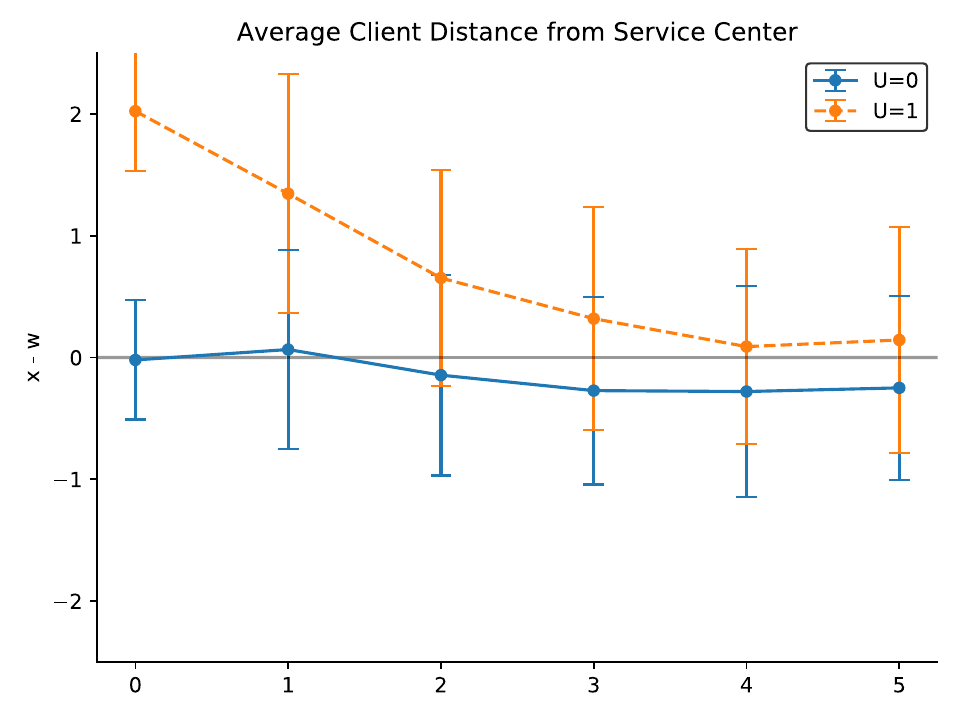}
  \end{minipage}%
  \caption{Trajectories induced by an agent trained against the model-free upper bound constraint. The
  trajectories are harder to interpret than when trained against the model-based constraint.}
  \label{fig:global-customer}  
\end{figure}

\newpage
\subsection{SNAP Allocation}
\begin{figure}
\centering
  \begin{minipage}{0.26\textwidth}
    \includegraphics[width=\textwidth]{./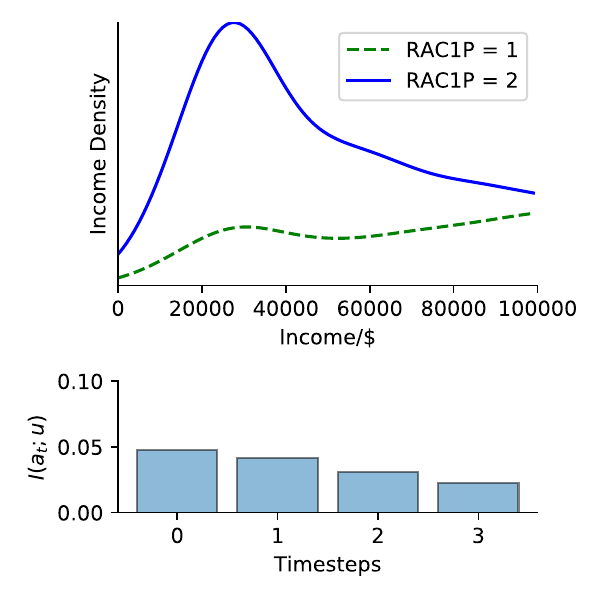}
  \end{minipage}%
  \begin{minipage}{0.26\textwidth}
    \includegraphics[width=\textwidth]{./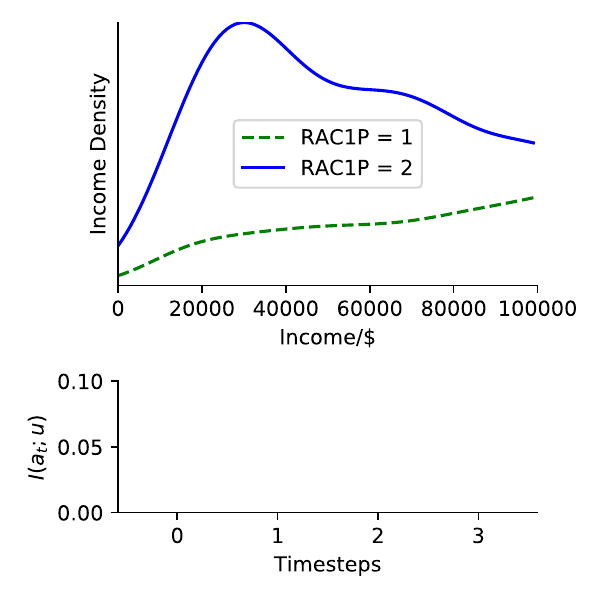}
  \end{minipage}%
  \caption{Final income distributions from privacy-constrained (right) and unconstrained (left) policies on the SNAP setting. The constrained policy removes all action-protected-variable mutual information.}
\label{fig:snap-results}  
\end{figure}

To investigate the possibility of applications in algorithmic fairness, we implement a more realistic experiment using the supplemental nutrition assistance program (SNAP) data from the American Community Survey (ACS) microdata.
For simplicity, we restrict our attention to households from Washington D.C., and to people who recorded their race under the `RAC1P'
code as `1' or `2', corresponding to \texttt{White alone} and \texttt{Black or African American
  Alone}.
This resulted in 5,304 records, of which 2,933 were white and 2,371 black.
Using these data, we then form a kernel density estimator of the income distribution of both race codes.
The agent has a binary action, to either give SNAP or not.
The next timestep is imagined to happen a year later.
The person's income is \(x_t = x_{t-1} + \epsilon\) in the case where SNAP isn't given (where \(\epsilon \sim \mathcal{N}(0, \sigma^2)\)), and in the case where SNAP is given the income is \(x_t = x_{t-1} + \epsilon + \gamma\), with \(\gamma \sim \text{Uniform}(1512, 7200)\). This \(\gamma\) represents the increase in income due to the SNAP program, where the minimum and maximum limits correspond to the upper and lower bounds of SNAP contribution~\cite{fangFoodFairnessArtificial}.
We chose \(\sigma = \) \$1000.
We treat the race variable as protected, and the income variable as unprotected.
The reward at each timestep is \(-\min\{0, L-x_t\}^2\), where \(L\) is the federal poverty level of
\$24,900.

\textbf{Results:} As we can see in figure \ref{fig:snap-results}, we are able to reduce the mutual information
exposed by the policy's actions while not changing the final income distribution unduly.
In the unconstrained case there is a noticeable peak around the poverty level as the policy
strictly gives assistance to those below and not above. The constrained case shows a gentler increase
corresponding to a stochastic policy that is limited in the extent to which it can use the sensitive race
variable (which is correlated to income).
As discussed in section \ref{sec:demographic-parity}, careful consideration of the societal
impacts of using this independence measure as a fairness benchmark is required.

\newpage
\subsection{Comparison to Differentially-Private Q-Learning}
\label{sec:comp-diff-priv}
Figure \ref{fig:dpql_actions_2} shows the action distribution for our approach, compared to the DPQL approach. We show the four states corresponding to \(u \in \{0, 1\}\) in the VPN example with four mirrors. The states with \(u \in \{2, 3\}\) are similar and omitted to make the plot clearer. We see that the DPQL approach works through adding noise to the \(Q\)-values, while our approach takes into account the feedback in the MDP, choosing the VPN at the first timestep.

\begin{figure}\centering
  \includegraphics[width=0.6\textwidth]{./figures/dpql-1.pdf}
  \caption{Distribution of actions in four states in the VPN example with four mirrors. From top to bottom, we have our approach with \(\boldsymbol{\lambda}=0\), our approach with \(\boldsymbol{\lambda}=1\), DPQL with \(\sigma=0.1\), and \(\sigma=5.0\)}
\label{fig:dpql_actions_2}
\end{figure}
 \newpage
\subsection{Full Algorithm}
\label{sec:full-algorithm}
For completeness we give the full algorithmic details in algorithms 1, 2 and 3. This pseudocode is very close to the
actual implementation.
    \begin{algorithm}[h]
      \label{alg:bedandsco}
      \caption{Model-Based MI-Constrained Policy Gradients}
      \begin{algorithmic}
        \STATE {\bfseries Input:} Vector \(\boldsymbol{\epsilon_t} \in \mathbb{R}_{\geq 0}^T\),
        number of \(\psi\) gradient steps \(n\), number of \(\phi\) gradient steps \(m\),
        gradient-based optimization method \texttt{step}, batch size \(B\), model \(p\).
        \STATE Initialize Lagrange multipliers \(\boldsymbol{\lambda} \in \mathbb{R}_{\geq 0}^T\),
        initial policy, discriminator and baseline parameters \(\phi, \psi_{1:T}, \theta\)
        \WHILE{min over \(\boldsymbol{\lambda}\) not converged}
        \STATE Change \(\boldsymbol{\lambda}\) by coordinate descent
        \WHILE{inner max-min over \(\psi, \phi\) not converged}
        \FOR{i = 1, \ldots, \(n\)}
        \STATE Draw batch \(\tau_{1:B} \sim q_\phi(\tau)\)
        \FOR{\(t=1, \ldots, T\)}
        \STATE \(g_{\psi, t} = \nabla_\psi \tfrac{1}{B}\sum_{j=1}^B\log q_\psi(\tau_{u, j}|\tau_{a,j,t})\)
        \STATE \(\psi_t = \mathtt{step}(\psi_t, g_{\psi, t})\)
        \ENDFOR
        \ENDFOR
        \FOR{i = 1, \ldots, \(m\)}
        \STATE Draw batch \(\tau_{1:B} \sim q_\phi(\tau)\)
        \STATE \(w,r,g, g' = \mathtt{zeros}(T)\), \(g_1, g_2', g_\theta = 0\)
        \FOR{\(\tau \in \tau_{1:B}\)}
        \FOR{t = 1, \ldots, \(T\)}
        \STATE \(x_t = \tau_{x, t}, \ a_t = \tau_{a, t}, \ u_t =\tau_{u, t}\)
        \STATE \( v = \int_{a'_t}q(a'_t|x_t, u_t)p(x_{t+1}|a'_t, x_t, u_t)\)
        \STATE \(g_t = \nabla_\phi \left[\log q_\phi(a_t|x_t, u)\right]\)
        \STATE \(w_t = g_tp(x_{t+1}|a_t, x_{t}, u_t) / v\)
        \STATE \(r_t = \log q_\psi(u_t|a_t) - \log p(u_t)\)
        \STATE \(A_t = \left[\sum_{t'=t}^T r(x_{t'}, a_{t'}, u_{t'})\right] - \theta(x_t, u_t)\)
        \STATE \(g_2' = g_2' + g_tA_t\)
        \STATE \(g_\theta' = g_\theta' + \nabla_\theta\left[(\theta(x_t, u_t) - A_t)^2\right]\)
        \ENDFOR
        \STATE \(g'_{1} = g'_{1} + r_t\left(g_t + \sum_{t'=1}^{t-1}w_{t'}\right)\)
        \ENDFOR
        \STATE \(g_1 = \tfrac{1}{B}\sum_{t=1}^Tg'_1\), \(g_2 = \tfrac{1}{BT}\sum_{t=1}^Tg'_2\)
        \STATE \(\phi = \mathtt{step}(\phi, g_1 + g_2)\), \(\theta = \mathtt{step}(\theta, \frac{1}{BT}g'_\theta)\)        
        \ENDFOR
        \ENDWHILE
        \ENDWHILE
      \end{algorithmic}
    \end{algorithm}

    \begin{algorithm}[h]
      \label{alg:bedandsco}
      \caption{Model-Free MI-Constrained Policy Gradients}
      \begin{algorithmic}
        \STATE {\bfseries Input:} Constraint \(\epsilon \geq 0\),
        number of \(\psi\) gradient steps \(n\), number of \(\phi\) gradient steps \(m\),
        gradient-based optimization update \texttt{step}, batch size \(B\)
        \STATE Initialize Lagrange multipliers \(\lambda \in \mathbb{R}_{\geq 0}^T\),
        initial policy, discriminator and baseline parameters \(\phi, \psi, \theta\)
        \WHILE{min over \(\lambda\) not converged}
        \STATE Change \(\lambda\) by coordinate descent
        \WHILE{inner max-min over \(\psi, \phi\) not converged}
        \FOR{i = 1, \ldots, \(n\)}
        \STATE Draw batch \(\tau_{1:B} \sim q_\phi(\tau)\)
        \FOR{j = 1, \ldots, \(B\)}
        \STATE \(g_{\psi} = g_{\psi, t} + \nabla_\psi \log q_\psi(\tau_{u, j}|\tau_{a,j}, \tau_{x, j})\)
        \ENDFOR
        \STATE \(\psi = \mathtt{step}(\psi, \tfrac{1}{B}g_{\psi})\)
        \ENDFOR

        \FOR{i = 1, \ldots, \(m\)}
        \STATE Draw batch \(\tau_{1:B} \sim q_\phi(\tau)\)
        \STATE \(w,r,g, g' = \mathtt{zeros}(T)\), \(g_1, g_2', g_\theta' = 0\).  
        \FOR{\(\tau \in \tau_{1:B}\)}
        \FOR{t = 1, \ldots, \(T\)}
        \STATE \(\boldsymbol{x}_t = \tau_{x, t}, \ a_t = \tau_{a, t}, \ \boldsymbol{u}_t =\tau_{u, t}\)
        \STATE \(g_t = \nabla_\phi \left[\log q_\phi(a_t|x_t, u_t,)\right]\)
        \STATE \(A_t = \left[\sum_{t'=t}^T r(x_{t'}, a_{t'}, u_{t'})\right] - \theta(x_t, u_t)\)
        \STATE \(g_2' = g_2' + g_tA_t\)
        \STATE \(g_\theta' = g_\theta' + \nabla_\theta\left[(\theta(x_t, u_t) - A_t)^2\right]\)
        \ENDFOR
        \STATE \(g_1 = g_1 + \frac{\log q_\psi(\tau_u|\tau_{a}, \tau_{x})}{\log p(\tau_u)}\sum_{t=1}^Tg_t\)
        \ENDFOR
        \STATE \(g_1 = \tfrac{1}{B}\sum_{t=1}^Tg'_1\), \(g_2 = \tfrac{1}{BT}\sum_{t=1}^Tg'_2\)
        \STATE \(\phi = \mathtt{step}(\phi, g_1 + g_2)\), \(\theta = \mathtt{step}(\theta, \frac{1}{BT}g'_\theta)\)
        \ENDFOR
        \ENDWHILE
        \ENDWHILE
      \end{algorithmic}
    \end{algorithm}

    \begin{algorithm}[H]
      \label{alg:bedandsco}
      \caption{Reparameterised MI-Constrained PPO}
      \begin{algorithmic}
        \STATE {\bfseries Input:} Constraint \(\epsilon_t > 0\),
        gradient-based optimization update \texttt{step}, batch size \(B\), number of epochs \(n\), number of minibatches \(m\),
        surrogate horizon \(k\), number of truncated rollouts \(r\)
        \STATE Initialize Lagrange multipliers \(\lambda \in \mathbb{R}_{\geq 0}^T\),
        initial policy, discriminator \(\phi, \psi\), PID state \(\mathcal{P}\)
        \FOR{i = 1, \ldots, \(n\)}
        \STATE Draw batch \(\tau_{1:mB} \sim q_\phi(\tau)\)
        \FOR{j = 1, \ldots, \(m\)}
        \STATE \(\ell_\text{PPO} = \operatorname{PPO}(\tau_{jB:(1+j)B}, \phi)\)
        \STATE Subsample \(r\) states, actions  \((x, u, a)\) from \(\tau_{jB:(1+j)B}\).
        \STATE \(\mathcal{L}_\psi = -\sum_{l=1}^{B}\sum_{c=1}^r\log q_\psi(u_{l,c} | a_{l,\max\{0, c-k\}:c})\)
        \STATE \(I= 0\)
        \FOR{l = 1, \ldots, \(r\)}
        \STATE Roll out \(\tau'\), the trajectory with initial state \((x_l, u_l)\) and lasting for \(k\) steps with policy \(\phi\)
        \STATE \(I = I + \sum_{c=1}^r \log (q_\psi(u'_c | a'_{1:c})) - \log q(u'_c)\)
        \ENDFOR\\
        \STATE \(I = \operatorname{stop-grad}(I, \psi)\)
        \STATE \(\mathcal{L} =  \mathcal{L}_\text{PPO} + \mathcal{L}_\psi + I\lambda\)
        \STATE \(\mathcal{P}, \lambda = \operatorname{PID-Update}(\mathcal{P}, I, \epsilon_t)\)
        \STATE \(\psi, \phi = \mathtt{step}((\psi, \phi), \nabla_{\psi, \phi}\mathcal{L} )\)
        \ENDFOR
        \ENDFOR
      \end{algorithmic}
    \end{algorithm}

    \begin{algorithm}[H]
      \label{alg:pid_update}
      \caption{PID Controller Update}
      \begin{algorithmic}
        \STATE {\bfseries Input:} MI constraint \(\epsilon\), PID parameters \(k_p, k_i, k_d\), integrated error \(\sigma\), MI at previous iterate \(I_\text{prev}\), MI \(I\).
        \STATE \(\delta = I - \epsilon\)
        \STATE \(\partial = \max\left\{0, I - I_\text{prev}\right\}\)
        \STATE \(\sigma = \max\left\{0, \sigma + \delta\right\}\)
        \STATE \(\lambda = \max\left\{0, k_p \delta + k_i \sigma + k_d \partial\right\}\)
        \STATE \(I_\text{prev} = I\)
        \STATE {\bfseries Output:} \(\lambda, I_\text{prev}, \sigma\)
      \end{algorithmic}
    \end{algorithm}

\newpage
\newpage
\vspace{200cm}
\subsection{Additional Experimental Details}
\label{sec:addit-exper-deta}
\paragraph{Score-Based and Model-Free Estimator}
Experiments on the score-based and model-free estimator were run on a dual-core 3.5GHz Intel i7 CPU, with the runs taking around one minute to complete.
We selected the hyperparameters by hand, observing which ones resulting in convergence.
In the case of the Lagrange multipliers \(\lambda\), we tried a few different settings to see which was best for equation \eqref{eq:main-dual-problem}.
We used the Adam optimizer \cite{kingmaAdamMethodStochastic2015} for gradient-based optimization.
\\
\paragraph{Reparameterised Estimator}
For experiments with the reparameterised estimator, we used an A4000 GPU and CUDA-based JAX~\citep{bradburyJAXComposableTransformations2020} and BRAX~\citep{freeman2021brax}. The predictor was a causally-masked transformer with learned position embeddings. The embeddings were learned for positions 1 to \(T\). The predictor was trained using sequences of length \(k\), from time indices \(t\) to \(t +k\). The BRAX PPO implementation was modified to add the predictor parameters to the training state. At each PPO minibatch step, \texttt{n-truncated-rollouts} states were sampled from the states fed into the loss. Then, these states were used as initial states to roll out \(k\) further steps using the current policy. The approximate mutual information \(I\) was calculated using the current predictor parameters and \(I \lambda\) was added to the loss, where \(\lambda\) was the current Lagrange multiplier. Additionally, the cross-entropy loss from the predictor between the prediction of \(u\) and the ground-truth \(u\) was added to the loss, training the predictor by maximum-likelihood. We used the same AdamW~\citep{loshchilov2018decoupled} optimizer for both the policy and the predictor, and used gradient clipping to stabilise training. We found that the same learning rate could be used for the policy and the predictor. For the plots in figure \ref{fig:pusher-ant-mi-reward}, we used a horizon of 10 and \(\epsilon\) of 0.001, 0.01, 0.1, 0.3 respectively. The networks for the policy and value function were left as the BRAX defaults: the policy as an MLP with four layers of 32 hidden nodes, and the value network as an MLP with five layers of 256 hidden nodes. Both networks had swish nonlinearities~\citep{ramachandran2017searching}. Due to a configuration error, the policy network for the  ant-constrained case presented in the main paper was given by three layers of 64 hidden nodes, and four hidden layers of 256 nodes. However, the results were very similar over different network sizes. 

For the determination of the Lagrange multiplier \(\lambda\), we used a PID controller, following~\citet{stooke2020responsive}. In particular, we use the standard PID update procedure, reproduced in algorithm \ref{alg:pid_update}. 

\begin{table}[]
\begin{tabular}{@{}lllll@{}}
\toprule
Hyperparameter     & Value              &  &  &  \\ \midrule
\(r^*\)            & 1.                 &  &  &  \\
\(r^-\)            & 0.5                &  &  &  \\
\(r^{\text{VPN}}\) & 0.9                &  &  &  \\
T                  & 10                 &  &  &  \\
Batch Size         & 32                 &  &  &  \\
Number of Epochs   & 5000               &  &  &  \\
Learning Rate      & 3\(\cdot 10^{-3}\) &  &  &  \\ \bottomrule \\
\end{tabular}
\caption{Internet Connectivity Hyperparameters}
\end{table}

\begin{table}[]
\begin{tabular}{@{}lllll@{}}
\toprule
Hyperparameter                     & Value              &  &  &  \\ \midrule
Environment Force Noise \(\sigma\) & 0.5                &  &  &  \\
Initial Position \(\sigma\)        & 1.                 &  &  &  \\
T                                  & 10                 &  &  &  \\
Batch Size                         & 128                &  &  &  \\
Number of Epochs                   & 4000               &  &  &  \\
Learning Rate                      & 3\(\cdot 10^{-3}\) &  &  &  \\ \bottomrule \\
\end{tabular}
\caption{2d Control Hyperparameters}
\end{table}

\begin{table}[]
\begin{tabular}{@{}lllll@{}}
\toprule
Hyperparameter                  & Value              &  &  &  \\ \midrule
Episode Length              & 100                    &  &  &  \\
  Learning Rate       & \(3\times 10^{-4}\) &  &  &  \\
  Entropy Cost & \(1\times 10^{-2}\) &  &  &  \\
  Batch Size                      & 512 &  &  &  \\
  Environment Steps & \(5 \times 10^9 \) &  &  &  \\
  Normalize Observations& True &  &  &  \\
  Num Minibatches & 16 &  &  &  \\
  Discount Rate & 0.95 &  &  &  \\
  Reward Scaling & 5 &  &  &  \\ 
  Number of Truncated Rollouts & 128 &  &  &  \\
  PID \(k_p, k_i, k_d\)& 1, 0.1, 0.001 &  &  &  \\
  Gradient Clip Level & 10 &  &  &  \\
  Weight Decay & \(1\times 10^{-6}\) &  &  &  \\
  \(\psi\) Layers & 6 &  &  &  \\
  \(\psi\) Embedding Size & 256 &  &  &  \\
  \(\psi\) Num Heads & 4 &  &  &  \\
  \(\psi\) Dropout & 0.1 &  &  &  \\
  Misc.  & BRAX PPO Example Default &  &  &  \\
  \bottomrule \\  
\end{tabular}
\caption{Reparameterized (PPO) Hyperparameters}
\end{table}

\begin{table}[]
\begin{tabular}{@{}lllll@{}}
\toprule
Hyperparameter       & Value              &  &  &  \\ \midrule
Income KDE Bandwidth & \$10,000           &  &  &  \\
T                    & 4                  &  &  &  \\
Batch Size           & 128                &  &  &  \\
Number of Epochs     & 1000               &  &  &  \\
Learning Rate        & 1\(\cdot 10^{-3}\) &  &  &  \\ \bottomrule \\
\end{tabular}
\caption{SNAP Hyperparameters}
\end{table}

\begin{table}[]
\begin{tabular}{@{}lllll@{}}
\toprule
Hyperparameter                  & Value              &  &  &  \\ \midrule
Initial Separation              & 2                  &  &  &  \\
Dynamics Noise \(\sigma\)       & 0.25               &  &  &  \\
Initial Distribution \(\sigma\) & 0.5                &  &  &  \\
T                               & 6                  &  &  &  \\
Batch Size                      & 12                 &  &  &  \\
Number of Epochs                & 5000               &  &  &  \\
Learning Rate                   & 1\(\cdot 10^{-4}\) &  &  &  \\ \bottomrule \\
\end{tabular}
\caption{Customer Service Hyperparameters}
\end{table}

\newpage
\subsection{Control of Variance with Truncated Predictor}
\label{sec:control-variance}
In this section we show that in the reparameterization case, the gradient norm increases with increased predictor horizon \(k\).
Figure \ref{fig:grad-norm-horizon} shows that as the predictor is trained on increasingly truncated sequences (i.e. \(k\) decreases),
the gradient norm generally decreases. Training on the full horizon, with \(k=100\), is generally unfeasible for this reason. We found that above \(k=20\), the training was unstable and would frequently diverge.

\begin{figure}\centering
  \includegraphics[width=\textwidth]{./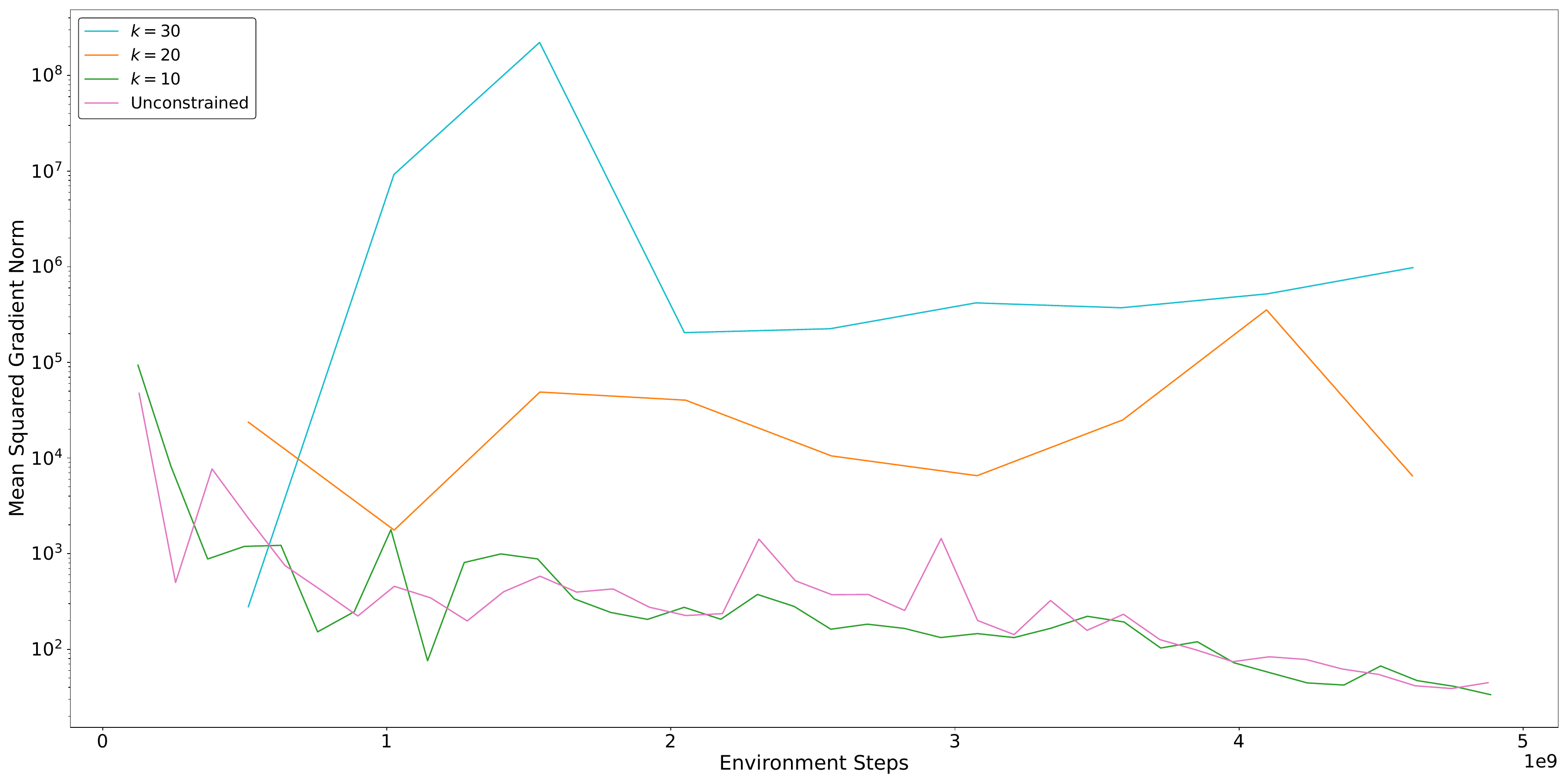}
  \caption{Gradient norm for different horizon lengths \(k\) during training. Note the logarithmic scale. As \(k\) increases, the gradient norm increases
  dramatically.}
  \label{fig:grad-norm-horizon}
\end{figure}

\newpage
\subsection{Upper Bound Loosesness with Shielded \(u\).}
\label{sec:u-shielded}
If we are completely unable to form a dynamics estimator we may use \(I_q(\tau_a, \tau_x;\tau_u)\)
to constrain \(I_q(a_t;u_t)\). However, there is a significant class of problems for which the
upper bound is loose, which we call \(u\)-shielded MDPs.

\begin{figure}[htb]
  \label{graph-model}
  \caption{}
\begin{center}
  \begin{tikzpicture}
 \node [align=center, thick, text          width=0.7cm, draw,        circle, radius=0.5, name=u]    at            (-2,0)   {$\boldsymbol u$};
 \node [align=center, thick, text          width=0.7cm, draw,        circle, radius=0.5, name=x1]    at            (-0.5,0)   {$\boldsymbol x_1$};
 \node [align=center, thick, text         width=0.7cm, draw,   circle,     radius=0.5, name=a1] at            (1,         0)           {$\boldsymbol a_1$};
 \node [align=center, thick, text         width=0.7cm, draw,   circle,     radius=0.5, name=x2] at            (2.5,         0)           {$\boldsymbol x_2$};
 \node [align=center, thick, text         width=0.7cm, draw,   circle,     radius=0.5, name=a2] at            (4,         0)           {$\boldsymbol a_2$};
 \node [align=center, draw=none, name=x3] at            (5.5,         0)           {}; 
 \draw [thick,        ->]          (u)         --      (x1);
 \draw [thick,        ->]          (x1)         --      (a1);
 \draw [thick,        ->]          (a1)         --      (x2);
 \draw [thick,        ->]          (x2)         --      (a2);
 \draw [thick,        ->]          (x1)         to[out=40, in=140]      (x2);
 \draw [dotted,        ->]          (x2)         to[out=40, in=140]      (x3);
 \draw [dotted,        ->]          (a2)         --      (x3); 
\end{tikzpicture}
\end{center}
\caption{A \(u\)-shielded MDP}
\end{figure}

They have the property where the protected variable \(u\) influences the initial distribution,
but subsequently \(u\) doesn't directly influence the dynamics or the rewards of the MDP. An idealised
example would be a banking system where minority status influences initial income distribution,
but (conditioned on income) the default rate doesn't depend on minority status, and the banking system's performance is evaluated
without reference to minority status. In that case the policy \(q(a|x,u)\) doesn't need to depend on
\(u\) at all, in the sense that for any policy \(q(a|x,u)\) there is another policy \(q(a|x,u) = q(a|x)p(u)\)
with the same reward. Hence without loss of generality we assume that all policies in the \(u\)-shielded setting are in
this \(u\)-independent form.

In that case, we have that
\begin{lemma}
    In a \(u\)-shielded MDP, and a greedy policy \(q_{\text{greedy}}\) which is reward-maximizing
  under no mutual information constraint, \(I_q(\tau_x, \tau_a ; u ) \geq I_{q_{\text{greedy}}}(a_t;u)\) for any
    policy \(q\).
\end{lemma}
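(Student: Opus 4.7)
The plan is to exploit the $u$-shielded structure to compute $I_q(\tau_x, \tau_a; u)$ exactly (as a constant independent of $q$), and then dominate $I_{q_{\text{greedy}}}(a_t;u)$ by that constant via the data processing inequality. First I would invoke the WLOG reduction stated right before the lemma: every policy may be taken to factor as $q(a_t \mid x_t, u) = q(a_t \mid x_t)$. Combined with the $u$-shielded assumption that $u$ influences only the initial state $x_1$ and neither the transitions $p(x_{t+1}\mid x_t,a_t)$ nor the rewards, this yields the Markov chain
\begin{equation*}
u \;\longrightarrow\; x_1 \;\longrightarrow\; (a_1, x_2, a_2, \ldots, x_T, a_T).
\end{equation*}

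Second, I would apply the chain rule of mutual information,
\begin{equation*}
I_q(\tau_x, \tau_a; u) \;=\; I(x_1; u) \;+\; I_q\bigl(a_1, x_2, \ldots, a_T; u \mid x_1\bigr),
\end{equation*}
and observe that the conditional term vanishes by the Markov property above. Hence $I_q(\tau_x, \tau_a; u) = I(x_1; u)$ for every policy $q$ in the $u$-independent family, and critically this equals a quantity determined entirely by $p(x_1\mid u)p(u)$, which does not depend on $q$.

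Third, I would apply the data processing inequality to the chain $u \to x_1 \to a_t$, which holds under the greedy policy (itself $u$-independent by the WLOG reduction, since the reward does not depend on $u$ in a $u$-shielded MDP). This gives $I_{q_{\text{greedy}}}(a_t; u) \leq I(x_1; u)$. Chaining the two results produces
\begin{equation*}
I_q(\tau_x, \tau_a; u) \;=\; I(x_1; u) \;\geq\; I_{q_{\text{greedy}}}(a_t; u),
\end{equation*}
as required.

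The argument is essentially a two-line DPI computation once the structural setup is correct, so the main obstacle is not analytic but definitional: one must carefully verify that conditioning on $x_1$ truly renders the entire tail $(\tau_x \setminus x_1, \tau_a)$ independent of $u$. This needs both halves of the $u$-shielded assumption (no $u$-dependence in dynamics or rewards) \emph{and} the WLOG restriction to $u$-independent policies; without either, the Markov chain $u \to x_1 \to (\tau_x \setminus x_1, \tau_a)$ would fail and $I_q(\tau_x, \tau_a; u)$ could exceed $I(x_1; u)$, breaking the clean equality that drives the bound.
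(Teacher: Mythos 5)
Your proposal is correct and follows essentially the same route as the paper's proof: decompose $I_q(\tau_x,\tau_a;u)$ via the chain rule so that only $I(x_1;u)$ survives (using the $u$-shielded structure and the $u$-independent-policy reduction), then apply the data processing inequality on $u \to x_1 \to a_t$ for the greedy policy. Your version is somewhat more explicit than the paper's about why the conditional terms vanish, but the argument is the same.
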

\begin{proof}
  We have \(I_q(\tau_x, \tau_a; u) = \sum_{t=1}^T I_q(x_t, a_t;u|\tau_{a_{1:t}}, \tau_{x_{1:t}}) =
  I_q(x_1;u)\), by the decomposition of mutual information and the fact that \(x_t\) and \(a_t\) are
  conditionally independent from \(u\) given \(x_1\). The data processing inequality on the
  Markov chain \(u \rightarrow x_1 \rightarrow a_t\) gives us \(I_\text{greedy}(a_t;u) \leq
  I_p(x_1;u)\). Therefore, we have \(I_\text{greedy}(a_t;u) \leq I_q(\tau_x, \tau_a;u)\).
\end{proof}
In particular, this result tells us that if we constrain our upper bound in the hope of reducing \(I(a_t;u)\),
if our problems have the \(u\)-shielded property then our constraint will never exclude the greedy policy.

\newpage
\subsection{Equivalence to Demographic Parity in Single-Timestep Case}
\label{sec:equiv-demogr-parity}
By writing the binary classification problem with a demographic parity constraint
as an RL problem, we can show its equivalence to our privacy constraint in the single-timestep case.
Since classification has no concept of feedback, we can describe the demographic parity constrained classification problem as an episodic MDP with only one timestep. We define the \emph{MDP-classification setting} as the MDP with \(T=1\), action space
\(\mathcal{A} = \{0, 1\}\) corresponding to the two possible labels, state space \(\mathcal{X} \times \mathcal{U}\), and a reward distribution
\(r(x_1, u_1, a_1) = -\mathds{1}\left[a_1 = y\right]\) for random variable \(y \sim p(y|x_1, u_1)\) (for \(\mathcal{Y} =
\{0, 1\}\)) corresponding to 0-1 loss.
\begin{lemma}
  Problem \eqref{eq:main-problem} with the MDP-classification setting
  is equivalent to the fair classification problem with the 1-0 loss 
  and the generalised demographic parity (DP) fairness constraint (\(I(\hat y; u) < \epsilon\)) for all
  \(p(x, u, y)\). Furthermore, the mutual information constraint in equation \eqref{eq:main-problem} is the
  only choice of constraint with this equivalence.
\end{lemma}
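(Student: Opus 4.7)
The plan is to unfold both sides in the $T=1$ MDP-classification setting, check that the objective and constraint match termwise, and then argue the uniqueness claim by contrasting $I_{q_\phi}(a_1;u_1)$ against the neighbouring MI functionals from Section~\ref{sec:alternative-threat-models}.

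For the forward direction I first specialise Problem~\eqref{eq:main-problem}. With $T=1$ the trajectory distribution collapses to $q_\phi(\tau)=p(x_1,u_1)\,q_\phi(a_1\mid x_1,u_1)$, so the objective becomes $\mathbb{E}_{(x_1,u_1)\sim p}\mathbb{E}_{a_1\sim q_\phi(\cdot\mid x_1,u_1)}[r(x_1,u_1,a_1)]$. Substituting the classification reward and marginalising over $y\sim p(y\mid x_1,u_1)$ turns this into the expected negative 0--1 loss of the (possibly stochastic) classifier $\hat y := a_1$. Maximising this expected reward is therefore identical to minimising the 0--1 risk of $\hat y(x,u)\sim q_\phi(\cdot\mid x,u)$, which is the fair-classification objective. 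At the same time the single constraint $I_{q_\phi}(a_1;u_1)<\epsilon_1$ is, after the renaming $\hat y=a_1$ and $u=u_1$, literally $I(\hat y;u)<\epsilon$, the generalised demographic parity constraint of \citet{songLearningControllableFair2019}. Thus a policy feasible and optimal for \eqref{eq:main-problem} is a classifier feasible and optimal for DP-constrained 0--1 classification, and vice versa; the correspondence is bijective because the MDP-classification setting contains no hidden structure beyond the conditional $q_\phi(a_1\mid x_1,u_1)$.

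For the uniqueness direction I would argue that, among the natural candidate constraints on the policy (those in the chain of inequalities in Section~\ref{sec:alternative-threat-models}, plus the conditional variant $I(a_t;u_t\mid x_t)$), only $I_{q_\phi}(a_1;u_1)$ reduces exactly to $I(\hat y;u)$ in the $T=1$ setting \emph{for every} joint $p(x,u,y)$. Concretely, $I_{q_\phi}(\tau_a,\tau_x;\tau_u)=I(a_1,x_1;u_1)=I(\hat y;u)+I(x_1;u_1\mid\hat y)$, which strictly exceeds $I(\hat y;u)$ whenever $x_1$ carries residual information about $u_1$ given $\hat y$; and $I(a_1;u_1\mid x_1)$ differs from $I(\hat y;u)$ whenever $x_1$ and $u_1$ are correlated (the classic zip-code/race situation the paper already highlights). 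A single explicit joint in which $x_1$ is a noisy copy of $u_1$ suffices to separate these functionals, so no other candidate constraint is equivalent to the DP constraint in the MDP-classification setting.

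The main obstacle is making the uniqueness clause precise: ``the only choice of constraint'' has to be interpreted as ``the only one among the MI functionals naturally attached to the policy'', because literally any monotone transformation of $I(a_1;u_1)$ would give an equivalent feasible set. I would therefore state uniqueness over the explicit menu from Section~\ref{sec:alternative-threat-models} (and the conditional variant), and verify inequivalence via the separating example above. The equivalence direction is essentially a definition-chase and carries no technical difficulty.
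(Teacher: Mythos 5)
Your equivalence argument is correct and is essentially the paper's own proof: both unfold the $T=1$ MDP into the single conditional $q_\phi(a_1\mid x_1,u_1)$, identify the expected reward under the $0$--$1$ reward with the negative $0$--$1$ risk of the stochastic classifier $\hat y = a_1$, and observe that the single constraint $I_{q_\phi}(a_1;u_1)<\epsilon_1$ is literally $I(\hat y;u)<\epsilon$ after renaming. Where you diverge is the uniqueness clause. The paper dispatches it with the near-tautological remark that, since the objectives coincide, the two problems are the same if and only if their constraints are the same functional --- which makes the claim true but carries essentially no content. You instead read ``the only choice of constraint'' as uniqueness over the explicit menu of MI functionals from Section~\ref{sec:alternative-threat-models} (plus the conditional variant $I(a_1;u_1\mid x_1)$) and separate the candidates with the decomposition $I(a_1,x_1;u_1)=I(a_1;u_1)+I(x_1;u_1\mid a_1)$ and a joint in which $x_1$ is a noisy copy of $u_1$. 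That is a strictly more informative reading, and your explicit acknowledgment that any monotone reparametrisation of $I(a_1;u_1)$ yields the same feasible set --- so uniqueness can only be meant relative to a fixed family --- is a caveat the paper's own proof silently glosses over. In short: same proof of the equivalence, a more honest and more substantive (if necessarily menu-relative) proof of the uniqueness claim; nothing in your argument is wrong.
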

\begin{proof}
  To solve problem \eqref{eq:main-problem} with the specified dynamics requires finding a policy
  \(q(a_1|x_1, u_1)\) that maximizes \(\mathbb{E}_{a_1, x_1, u_1, y \sim q(a_1, x_1, u_1, y)}\left[\mathds{1}\left[a_1
      = y\right]\right]\), subject to \(I(u_1; a_1) < \epsilon\). Solving the fair classification problem with demographic parity requires learning a model that emits a classification \(\hat Y\)
  maximizing \(\mathbb{E}_{\bx, \bu, Y \sim p(\bx, \bu, Y)}\left[\mathds{1}\left[\hat Y =
      Y\right]\right]\) subject to \(I(\hat Y;\bu) < \epsilon\). Hence we have the same objective, and so the
  problems are the same if and only if they have the same constraints, i.e. we have the constraint \(I(a_1;u_1)\).
\end{proof}

\end{document}